\newtheorem{assumption}[theorem]{Assumption}
\newcommand{\norm}[1]{\lVert#1\rVert}
\newcommand{\indicator}{\mathbf{1}}
\title[Insights into Soft-Label Training]{A Theoretical Analysis of Soft-Label vs Hard-Label Training in Neural Networks}
\author{%
 \Name{Saptarshi Mandal} \Email{smandal4@illinois.edu}\\
 \addr UIUC
 \AND
 \Name{Xiaojun Lin} \Email{xjlin@ie.cuhk.edu.hk}\\
 \addr CUHK%
 \AND
 \Name{R. Srikant} \Email{rsrikant@illinois.edu}\\
 \addr UIUC%
}
\begin{document}

\maketitle

\begin{abstract}%
 Knowledge distillation, where a small student model learns from a pre-trained large teacher model, has achieved substantial empirical success since the seminal work of \citep{hinton2015distilling}. Despite prior theoretical studies exploring the benefits of knowledge distillation, an important question remains unanswered: why does soft-label training from the teacher require significantly fewer neurons than directly training a small neural network with hard labels? To address this, we first present motivating experimental results using simple neural network models on a binary classification problem. These results demonstrate that soft-label training consistently outperforms hard-label training in accuracy, with the performance gap becoming more pronounced as the dataset becomes increasingly difficult to classify. We then substantiate these observations with a theoretical contribution based on two-layer neural network models. Specifically, we show that soft-label training using gradient descent requires only \(O\left(\frac{1}{\gamma^2 \epsilon}\right)\) neurons to achieve a classification loss averaged over epochs smaller than some \(\epsilon > 0\), where \(\gamma\) is the separation margin of the limiting kernel. In contrast, hard-label training requires \(O\left(\frac{1}{\gamma^4} \cdot \ln\left(\frac{1}{\epsilon}\right)\right)\) neurons, as derived from an adapted version of the gradient descent analysis in \citep{ji2020polylogarithmic}. This implies that when \(\gamma \leq \epsilon\), i.e., when the dataset is challenging to classify, the neuron requirement for soft-label training can be significantly lower than that for hard-label training. Finally, we present experimental results on deep neural networks, further validating these theoretical findings.

\end{abstract}

\begin{keywords}%
  Knowledge Distillation, Projected Gradient Descent, Model Compression 
\end{keywords}

\section{Introduction}
Knowledge distillation is a popular technique for training a smaller `student' machine learning model by transferring knowledge from a large pre-trained `teacher' model.  A lightweight machine learning model is useful in many resource-constrained application scenarios, such as cyber-physical systems, mobile devices, edge computing, AR/VR, etc. due to several reasons such as limitations in memory, inference speed, and training data availability. Knowledge distillation has been proven to be a powerful solution for these challenges through model compression \citep{hinton2015distilling,compression_caruana,gou2021knowledge}. Among the various methods of knowledge distillation \citep{Gou_2021}, one prevalent approach involves using the teacher model's output logits as soft targets for training the student model. Specifically, smaller models trained with well-designed soft labels demonstrate competitive performance compared to more complex models trained with original (hard) labels. This method and its variants have been shown to be effective in many settings such as object detection \citep{NIPS2017_object_KD}, reinforcement learning \citep{NEURIPS2020_RL} and recommendation systems \citep{PAN2019137_reco}.

The empirical success of knowledge distillation has motivated many theoretical studies that aim to understand why training with soft-labels from the teacher is more effective than training with hard-labels directly. However, to the best of our knowledge, prior works (reviewed in the Related Work subsection later) do not explain why, when training a neural network, soft-label training can succeed with much fewer number of neurons than hard-label training. To the best of our knowledge, this work is the first to provide theoretical insight into this phenomenon.

Our contributions in this paper can be summarized as follows:
 \begin{enumerate}
     \item As the motivation to understand why knowledge distillation is effective, we first present (in section \ref{sec:pre experiment}) a few experimental observations using a simple model: a 2-layer fully connected neural network with ReLU activation trained for binary classification on a dataset derived from MNIST, trained via projected gradient descent. We observed that for small neural networks, soft-label training reaches higher accuracy than hard-label training. While this is a well-known phenomenon, to understand the conditions under which soft-label training is really effective, we considered two versions of the dataset, one in which there were more digits and the other where there were fewer digits. We observed that the the performance gain due to soft-label training becomes more significant when the dataset is difficult to classify (i.e., the one with more digits) {\text{red}(can we add more experiments on the two-layer neural network)}. Our theoretical models and results are motivated by these experiments and explain these experimental observations.
     \item
     We theoretically analyze the training dynamics of a 2-layer neural network student model. Assuming access to the soft labels provided by the infinite-width teacher (which can be modeled by the limiting kernel of the neural network), we train the student to minimize the cross-entropy loss to the soft labels. We show that soft-label training using projected gradient descent requires \(O\left(\frac{1}{\gamma^2 \epsilon}\right)\) neurons to achieve a classification loss averaged over epochs smaller than \(\epsilon\), where $\gamma$ is the separation margin of the limiting kernel. For the hard label setting, we adapt the gradient descent results of \citep{ji2020polylogarithmic} to our projected gradient descent framework, which requires $O(1/\gamma^4)$ neurons. This result highlights the superiority of soft-label training in terms of neuron efficiency, particularly in challenging classification scenarios when $\gamma$ is very small. 
     
     \item  Finally, we perform experiments on deep learning models with real-world datasets. Our experiments confirm that the above insight is not only valid for shallow 2-layer networks, but also deeper networks like VGG and ResNet.
 \end{enumerate}

 \subsection{Related Work}
There are many papers that aim to explain different aspects of knowledge distillation. In this section, we focus on reviewing the works that contributes to the theoretical understanding of knowledge distillation. 

A few prior works focus on linear models using the cross-entropy loss for the distillation objective. \citep{phuong2021understanding} assumes that both the student and teacher models are of the form \(f(x) = w^\top x\), where \(w\) is the weight vector of the student network, \(x\) is the input vector, and \(f(x)\) is the output. They show that, under certain conditions, the student model converges to the same weight vector as the teacher model. \citep{NEURIPS_JI_ZHU} extends this result to NTK-linearized deep networks, where the width of the student network approaches infinity. Similarly, \citep{panahi_kernel} demonstrates that the predictions from a neural network trained with soft labels from a teacher network match those from the teacher when the student is an extremely wide two-layer neural network. 

\citep{das2023understanding} study a similar setup but highlight that self-distillation can mitigate the impact of noisy data. On the other hand, \citep{mobahi_SD} explain the benefits of self-distillation through improved regularization, focusing on fitting a nonlinear function to training data within a Hilbert space. 

 However, to the best of our knowledge, none of these studies address the central question of whether soft-label training enables the use of significantly fewer neurons compared to hard-label training, which is the primary focus of our paper. A significant limitation of these linear models is that they require the student and teacher to share the same feature representations. While this assumption may be reasonable for self-distillation, it fails to capture the fundamental principle of model compression, where the student model is often much smaller and has a different feature space than the teacher.

Another line of work explores the benefits of knowledge distillation by arguing that distillation primarily acts to reduce the variance of the empirical risk relative to training with hard labels, thereby improving generalization performance. \citep{menon2020distillation} argue that using Bayes class probabilities as targets, instead of hard labels, reduces the excess risk associated with the function set learned by the student, leading to better generalization. Similarly, \citep{zhou2021rethinking_bias} propose that soft labels provide a bias-variance trade-off in the cross-entropy loss for the student. 

However, despite the simplicity of their approaches, both \citep{menon2020distillation} and \citep{zhou2021rethinking_bias} focus only on the excess risk component of the generalization error and do not provide insights into the training dynamics or behavior of the student during learning.

In summary, prior works do not address the following question: why does distillation using soft labels from a teacher lead to good performance using a smaller neural network compared to training with hard labels? Answering this question is the main focus of our paper.

\section{Preliminary Experimental Observations}
\label{sec:pre experiment}
In this section, we present preliminary experimental observations on the training of a two-layer neural network student model using both ground truth hard labels and soft labels generated by a larger teacher network. For the experiments summarized in Table \ref{tab:2 layer MNIST}, we perform binary classification on the MNIST dataset, where digits greater than 4 are labeled as class 1 and others as class 0. 

Two configurations of the MNIST data are considered: 
\begin{enumerate}
    \item \textbf{Full dataset:} Includes images of all digits (0--9).
    \item \textbf{Reduced dataset:} Excludes images corresponding to the digits \(\{1, 7, 4, 9\}\).
\end{enumerate}

The first configuration is more challenging to classify because of the difficulty in distinguishing between visually similar digits such as 1 and 7 or 4 and 9. Our observations indicate that the performance of the student model trained with soft labels remains relatively stable when transitioning from the easier dataset to the harder one. In contrast, the performance of the student model trained with hard labels shows a more significant decline. For instance, in the case of a student network with 4 neurons, the accuracy drops significantly when moving from the reduced dataset to the full dataset under hard-label training. However, with soft-label training, the performance experiences only a minimal drop in accuracy.

\begin{table}[ht]\label{tab:2 layer MNIST}
    \centering
    
    \begin{tabular}{llcccc}
        \toprule
        Dataset & Teacher Net & Student Net & Teacher Acc & St\_Hard Acc & St\_Soft Acc \\
        \midrule
        all except 1,7,4,9 & 512 & 8 & 98.75 & 97.19 & 97.67 \\
                           & 512 & 6 & 98.75 & 96.84 & 97.26 \\
                           & 512 & 4 & 98.75 & 93.04 & 95.83 \\
                           
        \midrule
        all digits         & 512 & 8 & 98.34 & 95.77 & 96.58 \\
                           & 512 & 6 & 98.34 & 95.33 & 96.00 \\
                           & 512 & 4 & 98.34 & 88.84 & 95.62 \\
                    
        \bottomrule
    \end{tabular}
    \caption{Performance Comparison on Different Datasets for the 2-layer Model. The second and third columns denote the number of neurons in the hidden layer for the teacher and student networks, respectively.}
\end{table}

\section{Theoretical Results}
In the previous section, we presented a few experimental observations on student-teacher training using a simple model of two-layer neural network. In this section, we provide theoretical insights into these observations, summarized in Theorem \ref{thm:soft label kl} and the corresponding Corollary \ref{cor: soft label requirement indicator}. We begin by introducing the student-teacher model used for training, outlining a few assumptions about the training data, and specifying the choice of distillation loss for the student to be trained using soft labels. This is followed by a description of the Projected Gradient Descent (PGD) method used for training and a theoretical analysis of the training dynamics.

\subsection{Model and Assumptions}\label{subsection:model}

We consider the following fully-connected two-layer neural network,with $m$ neurons in its hidden layer and with ReLU activation, as the student model: 
\begin{align}\label{eq:NN}
    f(x;W,a) = \frac{1}{\sqrt{m}}\sum_{j=1}^m a_j \sigma(W_j^{\top}x) =  \frac{1}{\sqrt{m}}\sum_{j=1}^m a_j \indicator(W_j^{\top}x \geq 0)W_j^{\top}x,
\end{align}
where $\sigma(z) = \max(0,z)$ is the ReLU activation function, $a_j \in \mathbb{R}$ and $W_j \in \mathbb{R}^{d}$ for $j \in [1,m]$ are respectively the final layer weight and the hidden layer weight vector corresponding to each neuron $j$ in the hidden layer. Let the vector $a$ and the matrix $W$ denote the collection of $a_j$ and $W_j$ as their $j^{\text{th}}$ element and $j^{\text{th}}$ row, respectively. Let the neural network output for an input sample $x_i$ be further denoted as $f_i(W)$ for any $i \in [n]$.

We initialized the neural network using the symmetric random initialization which was proposed in \citep{bai2020linearization} and later used in \citep{semih_et_al}. The initial parameters are given as follows: $a_j = -a_{j+\frac{m}{2}} \stackrel{\text{i.i.d.}}{\sim} \text{Unif}\{-1,1\}$ and $W(0)_j = W(0)_{j+\frac{m}{2}} \stackrel{\text{i.i.d.}}{\sim} \mathcal{N}(0,I_d)$ independent and identically distributed over $j=1,2,..,\frac{m}{2}$ and are independent from each other. For this symmetric initialization, we assume \(m\) is an even number. The symmetric initialization ensures that $f_i(W(0)) = 0, \forall i \in [n]$.

The above model of neural network is studied in detail for hard-label training in the works by \citep{ji2020polylogarithmic}, \citep{du2019gradient}, \citep{arora2019finegrained}, etc. Similar to the setting in these prior works, we fix the final layer weight vector $a$ and only train the hidden layer weight matrix $W$. 

The dataset with the ground truth values under consideration is denoted by $\mathcal{D} \coloneqq \{(x_i,y_i)\}_{i=1}^n$ for some finite integer $n$ where $x_i \in \mathbb{R}^d$ and $y_i \in  \{-1,1\}$. For simplicity, assume $\|x_i\|_2 \leq 1, \forall i \in [1,n]$, which is standard in prior works. Let $x_i ,\forall i\in [n]$ also satisfy: $\norm{x_i} \geq c$ for some $c > 0$.

We make the following assumption  on the dataset $\mathcal{D}$ characterizing the separability by the corresponding infinite-width NTK as in \cite{ji2020polylogarithmic}:
Let $\mu_{\mathcal{N}}$ be the Gaussian measure on $\mathcal{R}^d$, given by the Gaussian density with respect to the Lebesgue measure on $\mathbb{R}^d$. We consider the following Hilbert space
\begin{equation}
    \mathcal{H} \coloneqq \left\{w : \mathbb{R}^d \rightarrow \mathbb{R}^d \Bigg| \int \norm{w(z)}_2^2 \,d\mu_{\mathcal{N}}(z) < \infty \right\}.
\end{equation}
For any $x \in \mathbb{R}^d$, define $\phi_x \in \mathcal{H}$ by $\phi_x(z) \coloneqq \indicator\left(\langle z,x \rangle > 0\right)x$.
\begin{assumption}\label{ass:RKHS}
    There exists $\overline{v} \in \mathcal{H}$ and $\gamma > 0$, such that $\norm{\overline{v}(z)}_2 \leq 1$ for any $z \in \mathcal{R}^d$, and for any $1 \leq i \leq n$,
    \begin{equation}
        y_i \langle \overline{v}, \phi_i \rangle_{\mathcal{H}} \coloneqq y_i \int \langle \overline{v}(z), \phi_i(z) \rangle \,d\mu_{\mathcal{N}}(z) \geq \gamma.
    \end{equation}
\end{assumption}
Using standard notations, $\phi_x(W_j(0)) = \indicator(W_j(0)^{\top}x >0)x$, for all $j \in [m]$ are called the NTK-features for input data $x$. Let us denote $\overline{\phi}_x^0\in \mathbb{R}^{md \times 1}$ as a concatenation of all $\phi_x(W_j(0))$ for input data $x$. The above assumption ensures the separability of the induced set $\left\{\overline{\phi}_{x_i}^0,y_i\right\}_{i=1}^n$ when $m$ is sufficiently large (see Lemma 2.3 in \citep{ji2020polylogarithmic}). As shown in \citep{ji2020polylogarithmic}, there is always a $\gamma >0$ satisfying assumption \ref{ass:RKHS} as long as no two inputs $x_i$ and $x_j$ with $i,j\in [1,n], i\neq j$ are parallel in the dataset $\mathcal{D}$.

For our theoretical results, we define the soft labels for the student model based on the Hilbert space \(\mathcal{H}\), as described below.
 
 Let \(v : \mathbb{R}^d \rightarrow \mathbb{R}^d\) be a function in the RKHS \(\mathcal{H}\) such that \(\|v(z)\|_2 \leq 1\) for all \(z \in \mathbb{R}^d\), and 
\(
y_i \mathbb{E}_{z \sim \mathcal{N}(0, I_d)} (\phi_i(z)^\top v(z)) \geq \gamma.
\)
Assumption~\ref{ass:RKHS} ensures the existence of such a \(v(z)\). Define for each $i \in [n]$ 
\[
z_i \coloneqq \mathbb{E}_{z \sim \mathcal{N}(0, I_d)} (\phi_i(z)^\top v(z)).
\]
The soft labels for each input \(x_i\) are then given by 
\[
p_i = \mu(z_i) \coloneqq \frac{1}{1 + \exp(-z_i)},
\]
where \(\mu(z)\) denotes the softmax function applied to the target logit \(z_i\). In other words, we assume that we have access to the soft labels from a teacher which is the kernel limit of an inifnite-width neural network. We fix the function \(v\) for the remainder of the paper. The choice of a particular \(v\) does not affect our analysis, provided it satisfies the conditions described above.

Our training objective is to minimize the following empirical risk over the dataset $\mathcal{D}$:
\begin{equation}\label{eq:soft loss}
 R^{KL}(W) \coloneqq \frac{1}{n}\sum_{i=1}^n\ell^{KL}(p_i,f_i(W)),
\end{equation}
where $\ell^{KL}(p_i,f_i(W))$ is the Kullback-Leibler (KL) divergence between the soft-label $p_i$ and the softmax version of the network output $f_i(W)$: $$\ell^{KL}(p_i,f_i(W)) \coloneqq p_i \ln \left(\mu(f_i(W)) /p_i\right) + (1-p_i) \ln \left((1-\mu(f_i(W)))/(1-p_i)\right).$$ 

 We now discuss the student training procedure. The student is trained on soft labels using a projected gradient descent (PGD) algorithm. First, we define $\nabla_{W_j} R^{KL}(W)$ which serves as the gradient of the risk with respect to the weight for the $j^{\text{th}}$ neuron:
\begin{align}
    \nabla_{W_j} R^{KL}(W) = \frac{1}{n}\sum_{i=1}^n \nabla_{f}\ell^{KL}(p_i,f_i(W))\nabla_{W_j}f_i(W)
\end{align} 
where, 
\begin{equation*}
    \nabla_{W_j}f_i(W)  = \frac{a_j}{\sqrt{m}}\phi_i(W_j) = \frac{a_j}{\sqrt{m}}\indicator(W_j^{\top}x_i > 0).
\end{equation*} 
Note that the gradient of the loss is defined for the whole domain of $W_j$ in $\mathcal{R}^d$ even though the ReLU activation function is not differentiable at 0. Let $W(t)$ denotes the weight matrix of the neural network after $t^{\text{th}}$ iteration of training. Let the feasible set of weights be defined as 
\(
S_B \coloneqq \{W: \|W_j - W_j(0)\|_2 \leq \frac{B}{\sqrt{m}}\},
\)
where \( B \) is a hyperparameter in the training. The PGD algorithm updates the weights per iteration using the following steps:

\begin{enumerate}
    \item \textbf{Descent step:} 
    \(
    \hat{W}_j(t+1) = W_j(t) - \eta \nabla_{W_j} R^{KL}(W_j(t)), \quad \forall t \geq 0.
    \)

    \item \textbf{Projection step:} 
    \(
    W_j(t+1) = \text{Projection of } \hat{W}_j(t+1) \text{ into } S_B.
    \)
\end{enumerate}

In the next subsection we provide the key theoretical result of this paper. In Theorem~\ref{thm:soft label kl}, we characterize the neuron requirement to achieve an arbitrarily small training loss averaged over epochs. We then compare this with the neuron requirement for hard-label training, as established for Gradient Descent in \citep{ji2020polylogarithmic}, by adapting their result to Projected Gradient Descent in this paper.

\subsection{Neuron Requirement for Soft-label Training}
We are now ready to present the first result of this paper. The following Theorem provides a characterization of the number of neurons required for soft-label training to achieve a small empirical risk, \(R^{KL}(W(t))\), averaged over iterations \(t < T\).

\begin{theorem}\label{thm:soft label kl}
Let \(\beta \in (0,1)\), \(\delta \in \left(0,\frac{1}{3}\right)\) be fixed real numbers. If the number of neurons \(m\) satisfies
\begin{equation}\label{eq:m requirement soft kl}
   m \geq \frac{C_1}{\beta}\left(\sqrt{\frac{2}{\pi}}\frac{1}{c} + 3\sqrt{\ln\left(\frac{2n}{\delta}\right)} \right)^2,
\end{equation} 
and the PGD algorithm is run with a projection radius \(B = 1\) for \(T\) iterations such that 
\(
T \geq \frac{9}{\beta^2},
\) 
using a constant step size \(\eta\) satisfying \(\eta \leq \frac{\beta}{3}\). Then, the following bound on the averaged empirical risk holds:
\begin{equation}
    \frac{1}{T}\sum_{\tau < T}R^{KL}(W(\tau)) \leq \beta,
\end{equation}
with probability at least \(1 - 3\delta\) over the random initialization. Here, \(C_1 = \frac{96}{(1+e^2)}\) is an absolute constant.
\end{theorem}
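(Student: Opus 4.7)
The plan follows a classic NTK-style convex-surrogate analysis adapted to projected gradient descent with the KL-divergence loss against soft labels. The backbone is to exhibit a reference weight $W^\star\in S_B$ at which the linearized network realizes the target logits $z_i$, then show that the averaged risk of the PGD iterates cannot exceed $R^{KL}(W^\star)$ by more than an overhead that vanishes with $m$ and $T$. Using the RKHS function $v$ from Assumption~\ref{ass:RKHS}, the natural choice is $W_j^\star := W_j(0)+\frac{a_j}{\sqrt m}\,v(W_j(0))$. Since $|a_j|=1$ and $\|v(\cdot)\|_2\leq 1$, we have $\|W_j^\star-W_j(0)\|_2\leq 1/\sqrt m = B/\sqrt m$ with $B=1$, so $W^\star\in S_1$ and $\|W(0)-W^\star\|_F^2\leq 1$. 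The linearization at $W^\star$ equals $\tilde f_i(W^\star)=\frac{1}{m}\sum_j \phi_i(W_j(0))^\top v(W_j(0))$, whose expectation over the i.i.d.\ symmetric initialization is exactly $z_i$.

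To quantify how well $f_i(W^\star)$ approximates $z_i$, Hoeffding's inequality plus a union bound over $i\in[n]$ yield $|\tilde f_i(W^\star)-z_i| = O(\sqrt{\ln(n/\delta)/m})$ uniformly in $i$. The linearization error $|f_i(W^\star)-\tilde f_i(W^\star)|$ is controlled by counting activation flips: a neuron $j$ can flip for input $x_i$ only when $|W_j(0)^\top x_i|\leq\|W_j^\star-W_j(0)\|_2\|x_i\|\leq 1/\sqrt m$, and Gaussian anti-concentration (density bounded by $1/(\|x_i\|\sqrt{2\pi})\leq 1/(c\sqrt{2\pi})$) gives a per-neuron flip probability $O(1/(c\sqrt m))$. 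A Bernstein-type bound caps the total flip count at $O(\sqrt m/c+\sqrt{m\ln(n/\delta)})$ with high probability, producing $|f_i(W^\star)-z_i|\leq \frac{1}{\sqrt m}(\sqrt{2/\pi}/c+3\sqrt{\ln(2n/\delta)})$ up to absolute constants. Since $\ell^{KL}(p_i,\cdot)$ has Hessian $\mu'(f)\leq 1/4$ in $f$ and vanishes at $z_i$ (because $p_i=\mu(z_i)$), a quadratic Taylor bound yields $R^{KL}(W^\star)=O\bigl((\sqrt{2/\pi}/c+3\sqrt{\ln(2n/\delta)})^2/m\bigr)$, exactly the structure appearing in \eqref{eq:m requirement soft kl}.

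For the convergence step, convexity of $\ell^{KL}$ in its second argument gives $\ell^{KL}(p_i,f_i(W(t)))-\ell^{KL}(p_i,f_i(W^\star))\leq(\mu(f_i(W(t)))-p_i)(f_i(W(t))-f_i(W^\star))$. Replacing $f_i(W(t))-f_i(W^\star)$ by its linearized surrogate at $W(t)$ incurs an error bounded by flips of $W_j(t)$ relative to $W_j^\star$; since both iterates lie in $S_1$, the fixed margin set $\{j:|W_j(0)^\top x_i|\leq 2/\sqrt m\}$ dominates this flip count uniformly in $t$ and is controlled by a single Bernstein bound evaluated only at initialization. This yields an approximate-convexity inequality $R^{KL}(W(t))-R^{KL}(W^\star)\leq\langle\nabla R^{KL}(W(t)),W(t)-W^\star\rangle+\mathrm{Err}$, with $\mathrm{Err}$ of the same order as the bound on $R^{KL}(W^\star)$ above. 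Combining with the standard PGD identity and non-expansiveness of the Euclidean projection onto $S_1\ni W^\star$ gives $\langle\nabla R^{KL}(W(t)),W(t)-W^\star\rangle\leq\frac{1}{2\eta}(\|W(t)-W^\star\|_F^2-\|W(t+1)-W^\star\|_F^2)+\frac{\eta}{2}\|\nabla R^{KL}(W(t))\|_F^2$, and using $\|\nabla R^{KL}\|_F\leq 1$ (from $|\mu(f)-p|\leq 1$ and $\|\nabla_{W_j}f_i\|\leq 1/\sqrt m$) together with $\|W(0)-W^\star\|_F^2\leq 1$, telescoping produces $\frac{1}{T}\sum_{t<T}R^{KL}(W(t))\leq R^{KL}(W^\star)+\frac{1}{2\eta T}+\frac{\eta}{2}+\mathrm{Err}$. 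The choices $\eta=\beta/3$ and $T\geq 9/\beta^2$ make the two step-size terms at most $\beta/6$ each, and the $m$-bound in \eqref{eq:m requirement soft kl} (with the explicit constant $C_1=96/(1+e^2)$ coming from careful tracking of the KL smoothness factor and the Gaussian anti-concentration constants) ensures $R^{KL}(W^\star)+\mathrm{Err}\leq 2\beta/3$. A union bound over the Hoeffding event at $W^\star$, the flip-count event between $W(0)$ and $W^\star$, and the trajectory-wide flip-count event yields the stated $1-3\delta$ probability. The main technical obstacle is making the approximate-convexity inequality uniform over the PGD trajectory; the observation that saves the argument is that every $W\in S_1$ sits within $1/\sqrt m$ of $W(0)$ per neuron, so the flip-controlling margin set depends only on the initialization and a single concentration bound dominates the linearization error for every iterate.
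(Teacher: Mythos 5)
Your proposal is correct and follows essentially the same route as the paper's proof: the same reference point $W^\star = W(0)+U$ with $U_j=\frac{a_j}{\sqrt m}v(W_j(0))$, Hoeffding plus a union bound to match the linearized output to $z_i$, an activation-flip count over the margin set $\{j:|W_j(0)^\top x_i|\lesssim 1/\sqrt m\}$ to control the linearization error uniformly over the trajectory, and the convexity-in-logits PGD regret bound with non-expansive projection, telescoping, and the same three-event union bound for $1-3\delta$. The only deviations are cosmetic: you bound the per-sample KL at the reference logits by a second-order Taylor expansion (using $\mu'\leq 1/4$ and the vanishing gradient at $f=z_i$) where the paper invokes a reverse Pinsker inequality, and you bound $\|\nabla R^{KL}\|_F\leq 1$ directly rather than via $R^{KL}(W(t))+H$, which dispenses with the entropy term $H$.
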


In light of the above Theorem, we next analyze the performance of the student network in classifying the training data. We define the classification error as:
\begin{equation}\label{eq:indicator loss}
    R(W) \coloneqq R(W; \mathcal{D}) \coloneqq \frac{1}{n}\sum_{i=1}^n \indicator(y_i f_i(W) > 0).
\end{equation}

Using a reverse Pinsker inequality (Lemma 4.1 of \citep{G_tze_2019}), we provide an upper bound on \( R(W) \) in terms of \( R^{KL}(W) \):

\begin{lemma}\label{lemma:relating indicator to kl}
    If \( 0 < p_i < 1 \), where \( p_i \) is the soft label in the above construction for the data input \( x_i \) for \( i \in [n] \), then the classification loss \( R(W) \) can be related to the surrogate loss \( R^{KL}(W) \) as follows:
    \begin{equation}
        R(W(t)) \leq \frac{32}{\gamma^2}R^{KL}(W(t)).
    \end{equation}
\end{lemma}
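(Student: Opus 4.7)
The plan is to establish a per-sample lower bound $\ell^{KL}(p_i, f_i(W)) \geq \gamma^2/32$ on every misclassified sample, and then average. First, I would locate the soft label $p_i$ relative to the threshold $1/2$ using the margin assumption. Setting $z_i := \mathbb{E}_{z \sim \mathcal{N}(0,I)}[\phi_i(z)^\top v(z)]$, the hypothesis on $v$ gives $y_i z_i \geq \gamma$, and a direct manipulation of the sigmoid yields $\mu(z) - 1/2 = \tfrac{1}{2}\tanh(z/2)$. Consequently $p_i - 1/2$ has the same sign as $y_i$, with $|p_i - 1/2| \geq \tfrac{1}{2}\tanh(\gamma/2)$. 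Since $|z_i| \leq \|v(z)\|_2 \|x_i\|_2 \leq 1$ forces $\gamma \leq 1$, I can apply the elementary bound $\tanh(x) \geq x/2$ on $[0,1]$ to conclude $|p_i - 1/2| \geq \gamma/8$.

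Next, I would translate a misclassification into a gap between $p_i$ and $\mu(f_i(W))$. On a misclassified sample, $y_i f_i(W) \leq 0$, so $y_i$ and $f_i(W)$ have opposite signs, and by monotonicity of $\mu$ together with $\mu(0)=1/2$ this forces $p_i$ and $\mu(f_i(W))$ to lie on opposite sides of $1/2$. In particular $|p_i - \mu(f_i(W))| \geq |p_i - 1/2| \geq \gamma/8$ whenever sample $i$ is misclassified.

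The final step invokes the cited reverse Pinsker lemma (Lemma 4.1 of G\"otze (2019)), which in the Bernoulli case gives $\ell^{KL}(p_i, f_i(W)) \geq 2(p_i - \mu(f_i(W)))^2$. Combined with the preceding lower bound this yields $\ell^{KL}(p_i, f_i(W)) \geq 2(\gamma/8)^2 = \gamma^2/32$ on misclassified samples. Using nonnegativity of the KL term on the remaining samples and averaging over $i \in [n]$,
\begin{equation*}
R^{KL}(W) = \frac{1}{n}\sum_{i=1}^n \ell^{KL}(p_i, f_i(W)) \geq \frac{\gamma^2}{32}\cdot \frac{1}{n}\sum_{i=1}^n \indicator\{y_i f_i(W) \leq 0\} = \frac{\gamma^2}{32}\, R(W),
\end{equation*}
which rearranges to the stated bound.

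The only delicate piece is the constant tracking in the first step: one needs $\tanh(\gamma/2) \geq \gamma/4$ (hence $|p_i - 1/2| \geq \gamma/8$) so that, after squaring and multiplying by the factor $2$ from the Bernoulli Pinsker inequality, the constant lines up precisely with the advertised $32/\gamma^2$. The rest of the argument is a two-case sign analysis and one invocation of a standard Pinsker-type inequality, so I do not expect any other obstacle.
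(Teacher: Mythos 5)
Your proof is correct and takes essentially the same route as the paper's: the Pinsker lower bound $\mathrm{KL}(p\|q) \geq 2|p-q|^2$, the sign/case analysis showing $|p_i - \mu(f_i(W))| \geq |p_i - \tfrac12|$ on misclassified samples, and the estimate $|p_i - \tfrac12| = \tfrac12\tanh(|z_i|/2) \geq \gamma/8$ using $\gamma \leq 1$, yielding the identical constant $32/\gamma^2$. (One naming quibble: the direction of Lemma 4.1 of G\"otze et al.\ you invoke is the standard Pinsker lower bound, not the \emph{reverse} Pinsker upper bound --- the latter is what the paper uses elsewhere to bound $\ell^{KL}$ from above --- but your application is the correct one for this lemma.)
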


Combining the above Lemma with Theorem~\ref{thm:soft label kl}, we arrive at the following Corollary:

\begin{corollary}\label{cor: soft label requirement indicator}
    Let \( \epsilon \in (0,1) \) and \( \delta \in \left(0,\frac{1}{3}\right) \) be fixed real numbers. If the number of neurons satisfies:
    \begin{equation}\label{eq:m requirement soft indicator}
        m \geq \frac{C_2}{\gamma^2 \epsilon} \left(\sqrt{\frac{2}{\pi}}\frac{1}{c} + 3\sqrt{\ln\left(\frac{2n}{\delta}\right)}\right)^2,
    \end{equation}
    then the PGD algorithm with \( B = 1 \), step size \( \eta \leq \frac{\gamma^2 \epsilon}{3} \), and iteration count \( T \geq \frac{9}{\gamma^4 \epsilon^2} \) ensures the following guarantee on the classification loss with probability at least \( 1 - 3\delta \):
    \begin{equation}
        \frac{1}{T} \sum_{\tau < T} R(W(\tau)) \leq \epsilon,
    \end{equation}
    where \( C_2 = 32\cdot C_1 \).
\end{corollary}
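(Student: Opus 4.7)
The plan is to chain the two ingredients already in hand: apply Lemma~\ref{lemma:relating indicator to kl} pointwise along the PGD trajectory, take time-averages, and then invoke Theorem~\ref{thm:soft label kl} with a carefully chosen target $\beta$ for the surrogate objective. Since the Lemma states $R(W) \le (32/\gamma^2)\, R^{KL}(W)$ at every weight configuration, summing this bound over $\tau < T$ and dividing by $T$ immediately gives
\begin{equation*}
\frac{1}{T}\sum_{\tau<T} R(W(\tau)) \le \frac{32}{\gamma^2}\cdot \frac{1}{T}\sum_{\tau<T} R^{KL}(W(\tau)).
\end{equation*}
This reduces the claim to producing a sufficiently small time-average of $R^{KL}$.

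Next I would set $\beta := \gamma^2 \epsilon / 32$ in Theorem~\ref{thm:soft label kl}. Substituting this value into the theorem's hypotheses recovers, up to absolute constants, exactly the conditions stated in the corollary: the neuron requirement $m \ge (C_1/\beta)(\cdots)^2$ becomes $m \ge (32 C_1/(\gamma^2\epsilon))(\cdots)^2$, matching the stated bound with $C_2 = 32 C_1$; the step-size constraint $\eta \le \beta/3$ and iteration bound $T \ge 9/\beta^2$ are of order $\gamma^2\epsilon$ and $1/(\gamma^4\epsilon^2)$ respectively, matching the orders in the corollary. The theorem then yields, with probability at least $1 - 3\delta$ over the random initialization, $\frac{1}{T}\sum_{\tau<T}R^{KL}(W(\tau)) \le \beta = \gamma^2\epsilon/32$. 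Plugging this into the averaged Lemma inequality immediately delivers $\frac{1}{T}\sum_{\tau<T} R(W(\tau)) \le \epsilon$ on the same high-probability event.

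There is no substantive mathematical obstacle here, since the argument is essentially a two-line chaining; the only care required is constant bookkeeping. In particular, strict substitution into Theorem~\ref{thm:soft label kl} with $\beta = \gamma^2\epsilon/32$ gives $T \ge 9\cdot 32^2/(\gamma^4\epsilon^2)$ and $\eta \le \gamma^2\epsilon/96$, which are slightly stronger than the versions written in the corollary; one either tightens the constants in the statement of $T$ and $\eta$ or absorbs the extra factors into the hidden constant $C_2$. The probability level $1 - 3\delta$ and the functional form of the neuron bound carry through verbatim because Lemma~\ref{lemma:relating indicator to kl} holds deterministically at every iterate, so the randomness budget is inherited entirely from Theorem~\ref{thm:soft label kl}.
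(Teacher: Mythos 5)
Your proposal is correct and follows essentially the same route as the paper: apply Lemma~\ref{lemma:relating indicator to kl} to reduce the classification loss to the surrogate loss, then invoke Theorem~\ref{thm:soft label kl} with the rescaled target $\beta = \gamma^2\epsilon/32$, which turns the neuron bound $C_1/\beta$ into $C_2/(\gamma^2\epsilon)$ with $C_2 = 32C_1$. Your bookkeeping is in fact slightly more careful than the paper's appendix (which contains a typographical slip writing the substitution as $32\epsilon/\gamma$), and your observation that strict substitution yields $T \geq 9\cdot 32^2/(\gamma^4\epsilon^2)$ and $\eta \leq \gamma^2\epsilon/96$ rather than the looser constants printed in the corollary is a legitimate point about the statement, not a flaw in your argument.
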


\subsubsection{Comparison with Hard-label Training}
Now we are ready to compare the neuron requirement based on Corollary \ref{cor: soft label requirement indicator} with that of the requirement for hard label training as established in \citep{ji2020polylogarithmic}. The empirical risk for hard-label training on the dataset \(\mathcal{D} \coloneqq \{x_i, y_i\}_{i=1}^n\) is defined as:
\begin{equation}\label{eq:hard loss}
     R^{h}(W) \coloneqq \frac{1}{n}\sum_{i=1}^n \ln(1+\exp(-y_if_i(W))).
\end{equation}

The following proposition for hard-label training is adapted from the result on gradient descent with hard labels by \citep{ji2020polylogarithmic}, modified to fit our projected gradient descent (PGD) setting. The modification introduces a projection step after each weight update, where the weight of each neuron is constrained to the set \(W_j - W_j(0) \leq \frac{B}{\sqrt{m}}\), for some hyperparameter \(B > 0\). The proof of Proposition \ref{prop: hard label PGD} is provided in the Appendix.

\begin{proposition}\label{prop: hard label PGD}
    Fix \(\beta \in (0,1)\) and \(\delta \in \left(0,\frac{1}{3}\right)\). With a choice of the projection ball radius \(B = \frac{2}{\gamma}\ln \left(\frac{2}{\ln(2)\beta}\right)\), if the number of training iterations satisfies 
    \(
    T \geq \frac{8}{\gamma^2 \eta}\frac{\ln^2 \left(\frac{2}{\ln(2)\beta}\right)}{\ln(2)\beta}
    \), $\eta \leq 1$,
    and the number of neurons satisfies:
    \begin{equation}\label{eq:m requirement hard}
        m \geq \frac{16}{\gamma^4}\left(\frac{2\sqrt{2}}{c\sqrt{\pi}}\ln\left(\frac{2}{\ln(2)\beta}\right) + 3\sqrt{\ln\left(\frac{2n}{\delta}\right)}\right)^2,
    \end{equation}
    then the following holds with probability at least \(1-3\delta\)
    \begin{equation}\label{eq:hard label surrogate small}
        \frac{1}{T}\sum_{\tau < T}R^h(W(\tau)) \leq  \beta.
    \end{equation}
\end{proposition}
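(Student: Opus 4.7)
The plan is to adapt the NTK-based analysis of \citep{ji2020polylogarithmic} for gradient descent on the hard-label logistic loss, modifying it to accommodate the projection step onto the convex feasible set $S_B$. The core argument is a potential-function decrease on $\Phi_t := \norm{W(t) - U}_F^2$ against a comparator $U \in S_B$. Using the RKHS element $\overline{v}$ from Assumption~\ref{ass:RKHS}, set $U_j := W_j(0) + (B\,a_j/\sqrt{m})\,\overline{v}(W_j(0))$; since $|a_j|=1$ and $\norm{\overline{v}(z)}_2 \le 1$, we have $U \in S_B$ and $\Phi_0 \le B^2$. A Hoeffding bound applied to the NTK-linearized output
\begin{equation*}
  f_i^{\mathrm{lin}}(U) \;=\; \frac{B}{m}\sum_{j=1}^m \phi_i(W_j(0))^{\top}\overline{v}(W_j(0)),
\end{equation*}
whose expectation equals $B\,\langle\overline{v},\phi_i\rangle_{\mathcal H}$, gives $y_i f_i^{\mathrm{lin}}(U) \ge B\gamma/2$ uniformly in $i$ with probability at least $1-\delta$, once $m$ satisfies \eqref{eq:m requirement hard} and $B = (2/\gamma)\ln(2/(\ln(2)\beta))$. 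Consequently $\ln(1+\exp(-y_i f_i^{\mathrm{lin}}(U))) \le \ln(2)\beta/4$ for every $i$.

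Next I would carry out the projected potential inequality. Convexity of the logistic loss in the network output, combined with activation-pattern stability inside $S_B$ (which ensures $\langle \nabla f_i(W(t)),W(t)-U\rangle = f_i(W(t))-f_i^{\mathrm{lin}}(U)+\mathcal{E}_i(t)$ with $\mathcal{E}_i(t)$ controlled by the fraction of neurons whose ReLU sign flips inside $S_B$), yields
\begin{equation*}
  \norm{\hat W(t{+}1) - U}_F^2 \;\le\; \norm{W(t) - U}_F^2 - 2\eta\bigl(R^h(W(t)) - R^{h,\mathrm{lin}}(U)\bigr) + 2\eta\,\mathcal{E}(t) + \eta^2 \norm{\nabla R^h(W(t))}_F^2 .
\end{equation*}
The crucial new ingredient, relative to \citep{ji2020polylogarithmic}, is that $S_B$ is convex and $U \in S_B$, so the projection step is a contraction toward $U$: $\Phi_{t+1} \le \norm{\hat W(t{+}1) - U}_F^2$, and the inequality survives projection. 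Together with the logistic gradient self-boundedness $\norm{\nabla R^h(W(t))}_F^2 \lesssim R^h(W(t))$ and $\eta \le 1$, a strictly positive coefficient of order $\eta$ multiplies $R^h(W(t))$ on the right-hand side.

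Telescoping from $t=0$ to $T-1$ using $\Phi_T \ge 0$ and $\Phi_0 \le B^2$ gives, schematically,
\begin{equation*}
  \frac{1}{T}\sum_{t<T} R^h(W(t)) \;\lesssim\; R^{h,\mathrm{lin}}(U) + \frac{B^2}{\eta T} + \frac{1}{T}\sum_{t<T}\mathcal{E}(t).
\end{equation*}
Step~1 controls the first term by $\ln(2)\beta/4$; the stated lower bound on $T$, once $B^2 = 4\ln^2(2/(\ln(2)\beta))/\gamma^2$ is substituted, drives $B^2/(\eta T) \le \ln(2)\beta/4$; and the neuron bound \eqref{eq:m requirement hard} drives the average linearization error below $\ln(2)\beta/4$. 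Since $\ln(2)\beta \le \beta$, the advertised bound follows, with probability at least $1-3\delta$ after a union bound across the comparator concentration, the activation stability, and the gradient-norm concentration events.

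The chief technical difficulty will be the uniform-in-$t$ activation-pattern stability: one must show that on every training point only an $O(B/\sqrt{m})$ fraction of neurons flip their ReLU sign as $W$ ranges over $S_B$, and then propagate this to bound $\mathcal{E}(t)$ throughout training rather than only at initialization. This is precisely where the $1/\gamma^4$ neuron count originates: $B = \Theta(1/\gamma)$ enters quadratically through both $\Phi_0 \le B^2$ and the radius over which activation stability must be maintained, in contrast with $B = \Theta(1)$ in the soft-label proof of Theorem~\ref{thm:soft label kl}.
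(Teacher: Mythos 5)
Your proposal follows essentially the same route as the paper: the same comparator $\overline{W}=W(0)+B U$ with $U_j=\frac{a_j}{\sqrt m}\overline v(W_j(0))$, the same Hoeffding concentration for the NTK-linearized margin (the paper's Lemma~\ref{lemma:sub sample}), the same activation-pattern stability bound inside $S_B$ (the paper's Lemma~\ref{lemma:NTK approx error}), and the same observation that convexity of $S_B$ makes the projection a contraction toward the comparator so that the Ji--Telgarsky potential/telescoping argument survives, followed by the same choices of $B$, $m$, and $T$. Your splitting of the comparator loss into $R^{h,\mathrm{lin}}(U)+\mathcal E(t)$ is just a re-bracketing of the paper's direct lower bound on $y_i f_i^t(\overline W)$, so the argument is correct and not materially different.
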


The hard-label surrogate loss guarantee in equation~\eqref{eq:hard label surrogate small} implies the following classification loss guarantee:
\[
\frac{1}{T}\sum_{\tau < T}R(W(\tau)) \leq \frac{1}{\ln(2)}\beta.
\]

It is worth mentioning that, without the projection step, the neuron requirement for hard-label training, as established by \citep{ji2020polylogarithmic}, is \(O\left(\frac{1}{\gamma^8}, \ln\left(\frac{n}{\delta}\right), \ln\left(\frac{1}{\epsilon}\right)\right)\).

\textbf{Comparing the neuron requirements:} Based on Corollary~\ref{cor: soft label requirement indicator} and Proposition~\ref{prop: hard label PGD}, the requirements for hard-label and soft-label training to ensure the classification loss averaged over iterations is less than some \(\epsilon > 0\) can be expressed as:
\[
O\left(\frac{1}{\gamma^4}, \ln\left(\frac{n}{\delta}\right), \ln\left(\frac{1}{\epsilon}\right)\right) \quad \text{(hard labels)},
\]
and
\[
O\left(\frac{1}{\gamma^2 \epsilon} \ln\left(\frac{n}{\delta}\right), \frac{1}{\beta}\right) \quad \text{(soft labels)}.
\]

Suppose that there is a target classification error of epsilon. The above results suggest that, when \(\gamma < \epsilon\), i.e., when the data-set is more difficult the separate, soft-label training reduces the neuron requirement by a factor of \(\gamma\) in the regime of PGD training. In other words, when the separation margin \(\gamma\) associated with the classification task is sufficiently small, soft-label training requires significantly fewer neurons to achieve similar performance compared to hard-label training.

\subsection{Proof Sketch of Theorem \ref{thm:soft label kl}}
While the detailed proofs of all the results presented in the paper is provided in the Appendix, we present a high-level proof sketch for Theorems \ref{thm:soft label kl} here. Before presenting the proof sketch, we first introduce some additional notations and quantities. 

 Similar to the definition of \(\overline{\phi}^0_i\), define the feature map at iteration \(t\), \(\overline{\phi}^t_i\), based on the weight \(W(t)\). Specifically, the feature corresponding to the \(j^{\text{th}}\) neuron is given by \(\phi_i(W_j(t))\). Now, define \(f_i^t(W)\) for each data sample \(x_i\) as:
\[
f_i^t(W) \coloneqq \frac{1}{\sqrt{m}} \sum_{j=1}^m a_j \indicator(W_j(t)^\top x_i \geq 0) W_j^\top x_i = \frac{1}{\sqrt{m}} \langle \overline{\phi}^t_i, W \rangle,
\]
where \(W_j(t)\) is the weight of the \(j^{\text{th}}\) neuron at the \(t^{\text{th}}\) iteration of PGD.

Using these definitions, we define the following expression for each \(t \leq T\):
\[
R^{t,KL}(W) \coloneqq \frac{1}{n} \sum_{i=1}^n \ell^{KL}(p_i, f_i^t(W)).
\]

 \textbf{Initial Feature Map and Separability:}
 The following Lemma \ref{lemma:sub sample} implies the existence of a weight matrix \(U\) satisfying \(\|U_j - W_j(0)\|_2 \leq \frac{1}{\sqrt{m}}\) such that \(|\langle \overline{\phi}^0_i, U \rangle - z_i|\) is small for all \(i \in [n]\), with high probability, when \(m\) is sufficiently large.
    
    \begin{lemma}\label{lemma:sub sample}
Let \(U \in \mathbb{R}^{m \times d}\) be defined as \(U_j = \frac{a_j}{\sqrt{m}}v(W_j(0)), \forall j \in [m]\). Then, under Assumption~\ref{ass:RKHS}, for the symmetric random initialization of the neural network weights, the following holds with probability at least \(1-\delta\):
\begin{equation}\label{eq:sub_sample_1}
   \left| f_i^0(U) - z_i \right| \leq \frac{1}{\sqrt{m}} \sqrt{2\ln(2n/\delta)}, \quad \forall i \in [1,n],
\end{equation}

\end{lemma}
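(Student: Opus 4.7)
The plan is to compute $f_i^0(U)$ in closed form, recognize it as an empirical mean whose expectation is exactly $z_i$, and then conclude via a Hoeffding-type concentration inequality combined with a union bound over the $n$ data points.

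First I would substitute $U_j = (a_j/\sqrt{m})\, v(W_j(0))$ into the expression $f_i^0(U) = (1/\sqrt{m}) \langle \overline{\phi}_i^0, U \rangle = (1/\sqrt{m}) \sum_{j=1}^{m} a_j\, \phi_i(W_j(0))^\top U_j$. Using $a_j^2 = 1$ and $\phi_i(W_j(0)) = \indicator(W_j(0)^\top x_i > 0)\, x_i$, this collapses to
\begin{equation*}
    f_i^0(U) = \frac{1}{m} \sum_{j=1}^{m} \indicator(W_j(0)^\top x_i > 0)\, x_i^\top v(W_j(0)).
\end{equation*}
Under the symmetric initialization, the pairs $(j, j+m/2)$ share the same $W_j(0)$, so this empirical average is equivalently an average of the $m/2$ i.i.d.\ random variables $X_{ij} := \indicator(W_j(0)^\top x_i > 0)\, x_i^\top v(W_j(0))$ for $j \in [m/2]$. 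By the definition of $z_i$, $\mathbb{E}[X_{ij}] = \mathbb{E}_{z \sim \mathcal{N}(0,I_d)}[\phi_i(z)^\top v(z)] = z_i$, and the boundedness hypotheses $\|x_i\|_2 \leq 1$ and $\|v(z)\|_2 \leq 1$ imply $|X_{ij}| \leq 1$.

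With these ingredients in hand, Hoeffding's inequality applied to the $m/2$ bounded i.i.d.\ summands yields a tail bound of the form $\mathbb{P}(|f_i^0(U) - z_i| > t) \leq 2 \exp(-c\, m\, t^2)$ for an absolute constant $c > 0$. Choosing $t = (1/\sqrt{m}) \sqrt{2 \ln(2n/\delta)}$ (after tracking the precise Hoeffding constant) makes the right-hand side at most $\delta/n$, and a union bound over $i \in [n]$ then delivers the stated guarantee with probability at least $1-\delta$. I do not anticipate a substantive obstacle: the only care needed is to keep bookkeeping on the constant introduced by the effective sample size $m/2$ (rather than $m$) under the symmetric initialization, and to note that the measure-zero event $\{W_j(0)^\top x_i = 0\}$ may be ignored because $W_j(0)$ has a continuous density under $\mathcal{N}(0, I_d)$, rendering the choice of strict versus non-strict indicator irrelevant almost surely.
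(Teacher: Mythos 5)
Your proposal is correct and follows essentially the same route as the paper: rewrite $f_i^0(U)$ as an empirical average of $m/2$ i.i.d.\ bounded terms (using $a_j^2=1$ and the paired symmetric initialization), identify its mean as $z_i$, apply Hoeffding, and union-bound over $i\in[n]$. Your closing remark about tracking the constant from the effective sample size $m/2$ is well taken --- a literal application of Hoeffding to $m/2$ summands in $[-1,1]$ gives $\tfrac{2}{\sqrt{m}}\sqrt{\ln(2n/\delta)}$ rather than the stated $\tfrac{1}{\sqrt{m}}\sqrt{2\ln(2n/\delta)}$, a $\sqrt{2}$ discrepancy the paper's own one-line proof also glosses over.
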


This observation suggests that a linear function (linear in the weight \(W\)) of the form \(\langle \overline{\phi}^0_i, W \rangle\) can approximate the soft label \(z_i\) for each \(i\) with high probability. This intuition is crucial for the subsequent steps of the proof.

 \textbf{Convergence of Soft Label Surrogate Loss:}
   Next we show that under sufficient conditions on $T$ and $\eta$, the soft-label risk averaged over all iterations converges to the quantity \(\frac{1}{T} \sum_{t < T} R^{t,KL}(\overline{W})\) for any \(\overline{W}\) in the feasible set \(S_B\). 

The remainder of the proof is devoted to showing that \(R^{t,KL}(\overline{W})\) is small for all \(t \leq T\) with high probability, for an appropriately chosen \(\overline{W}\).

\textbf{Bounding $R^{t,KL}(\overline{W})$:}
   The idea is to bound $\ell^{KL}(p_i, f_i^t(\overline{W}))$ for each $i$ with high probability. For this purpose, we use a reverse Pinsker's inequality, as stated in Lemma 4.1 of \citep{G_tze_2019}. This inequality allows us to upper-bound the KL divergence loss \(\ell^{KL}(p_i, f_i^t(\overline{W}))\) for each \(i\) by the distance between the corresponding logits, \(|z_i - f_i^t(\overline{W})|\).

Now, choosing \(\overline{W} = W(0) + U\) and applying the triangle inequality, we obtain:
\[
|z_i - f_i^t(\overline{W})| \leq |z_i - f_i^0(U)| + |f_i^t(U) - f_i^0(U)| + |f_i^t(W(0)) - f_i^0(W(0))| .
\]

From Lemma~\ref{lemma:sub sample}, we know that \(\|z_i - f_i^0(U)\|_2\) is small under the sufficient conditions. Therefore, it remains to show that the terms \(|f_i^t(U) - f_i^0(U)|\) and $|f_i^t(W(0)) - f_i^0(W(0))|$ are both small for each \(i\) with high probability. We use the analysis in the neural tangent kernel literature (\citep{jacot2020neural,chen2020generalized}) for bounding this expression.

\section{More Experimental Results}

In this section, we validate our results using deep neural networks, with VGG 8+3 as the teacher and VGG 2+3 as the student; see \citep{simonyan2014very} for a detailed description of the VGG architecture. The dataset chosen for our experiments is the CIFAR-10 cat/dog dataset. 

Since our theory suggests that harder-to-classify datasets benefit more from soft-label training, we added Gaussian noise to the CIFAR-10 cat/dog dataset to make it more challenging to classify. We compare the performance of soft-label vs. hard-label training across different datasets: the original CIFAR-10 dataset and noise-added CIFAR-10 datasets. The results are summarized in Figure \ref{fig:vgg test}. Consistent with our theoretical predictions, the experiments demonstrate that harder-to-classify datasets benefit significantly more from distillation. 

All experimental points are averaged over 10 independent runs. For each run, we employed early stopping with a patience of 20 epochs and a maximum of 100 iterations. The dataset was split into training, validation, and test sets with proportions of 80\%, 10\%, and 10\%, respectively. We trained the models using gradient descent with the Adam optimizer and applied \(L_2\) regularization.

\begin{figure}[ht]\label{fig:vgg test}
    \centering
    \includegraphics[width=0.65\textwidth]{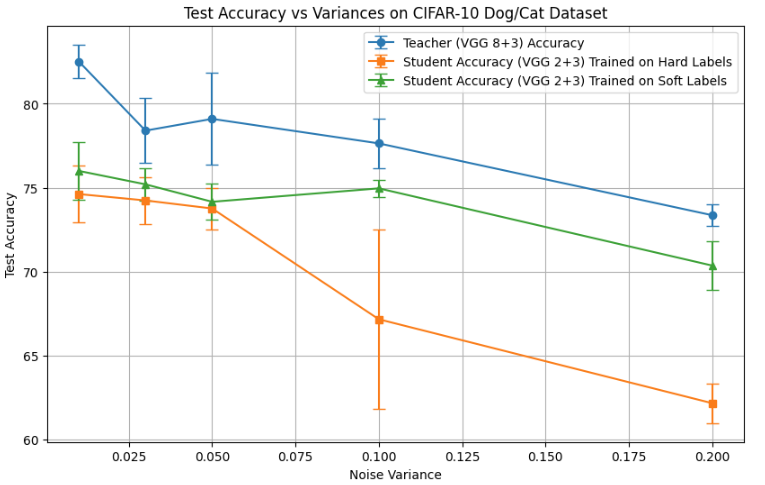} 
    \caption{Classification accuracy under Gaussian noise on CIFAR-10 cat/dog with VGG 8+3 as the teacher and VGG 2+3 as the student.}
    \label{fig:gaussian_noise_cifar10}
\end{figure}

\section{Conclusions}
In this paper, we provide theoretical results which show that soft-label training leads to fewer number of neurons than hard--label training for the same training accuracy. Our proofs provide some intuition for this phenomenon, as stated at the end of the longer version of the paper, which we summarize here. The parameters of a neural network have a dual role: one is to identify good features and the other is to assign weights to these features. With good initialization, one can start the training with good features. In contrast to hard-label training, soft-label training ensures that the network parameters do not deviate too much from initialization, thus approximately maintaining good features throughout the training process, but they deviate just enough to assign the right weights to various features. 

\bibliography{iclr2025_conference}

\appendix
\section{Proof of Theorem \ref{thm:soft label kl}}
In this section, we provide the detailed proof of Theorem \ref{thm:soft label kl}. A proof sketch is given as follows:
\begin{enumerate}
    \item \textbf{Initial Feature Map and Separability:}
    Recall that the feature map given by the initial weights for a data input \(x_i\) is denoted by \(\overline{\phi}^0_i\). Lemma \ref{lemma:sub sample} shows that this feature map, based on the initial weights, separates the dataset with a margin of order \(O(\gamma)\), provided the number of neurons is of the order \(O\left(\frac{1}{\gamma^2}\right)\). Lemma \ref{lemma:sub sample} further implies the existence of a weight matrix \(U\) satisfying \(\|U_j - W_j(0)\|_2 \leq \frac{1}{\sqrt{m}}\) such that \(|\langle \overline{\phi}^0_i, U \rangle - z_i|\) is small for all \(i \in [n]\), with high probability, when \(m\) is sufficiently large. Notice that Lemma \ref{lemma:sub sample} is a extended version of Lemma \ref{lemma:sub sample} stated in the main part of the paper.

This observation suggests that a linear function (linear in the weight \(W\)) of the form \(\langle \overline{\phi}^0_i, W \rangle\) can approximate the soft label \(z_i\) for each \(i\) with high probability. This intuition is crucial for the subsequent steps of the proof.

    \item \textbf{Convergence of Soft Label Surrogate Loss:}
    Next, we provide a convergence result for the soft-label surrogate loss in Lemma~\ref{lemma:descent soft} under PGD updates over iterations. To describe the statement in Lemma~\ref{lemma:descent soft}, we first introduce some additional notations and quantities. 

Define \(f_i^t(W)\) for each data sample \(x_i\) as:
\[
f_i^t(W) \coloneqq \frac{1}{\sqrt{m}} \sum_{j=1}^m a_j \indicator(W_j(t)^\top x_i \geq 0) W_j^\top x_i,
\]
where \(W_j(t)\) is the weight of the \(j^{\text{th}}\) neuron at the \(t^{\text{th}}\) iteration of PGD. Similar to the definition of \(\overline{\phi}^0_i\), define the feature map at iteration \(t\), \(\overline{\phi}^t_i\), based on the weight \(W(t)\). Specifically, the feature corresponding to the \(j^{\text{th}}\) neuron is given by \(\phi_i(W_j(t))\). Hence, for all \(t \leq T\) and \(i \in [n]\), we can write:
\[
f_i^t(W) = \frac{1}{\sqrt{m}} \langle \overline{\phi}^t_i, W \rangle.
\]
Using these definitions, we define the following expression for each \(t \leq T\):
\[
R^{t,KL}(W) \coloneqq \frac{1}{n} \sum_{i=1}^n \ell^{KL}(p_i, f_i^t(W)).
\]

We are now ready to state Lemma~\ref{lemma:descent soft}. This lemma shows that if the number of iterations \(T\) is sufficiently large and the step size \(\eta\) is small, the soft-label risk averaged over all iterations converges to the quantity \(\frac{1}{T} \sum_{t < T} R^{t,KL}(\overline{W})\) for any \(\overline{W}\) in the feasible set \(S_B\). 

The remainder of the proof is devoted to showing that \(R^{t,KL}(\overline{W})\) is small for all \(t \leq T\) with high probability, for an appropriately chosen \(\overline{W}\), provided the number of neurons \(m\) is sufficiently large. This ensures that the soft-label risk averaged over all iterations converges to a small value if PGD is run for a sufficient number of iterations under suitable conditions on \(m\).

   \item \textbf{Bounding $R^{t,KL}(\overline{W})$:}
   The approach to bounding \(R^{t,KL}(\overline{W})\) for each \(i \in [1,n]\) relies on using a reverse Pinsker's inequality, as stated in Lemma 4.1 of \citep{G_tze_2019}. This inequality allows us to upper-bound the KL divergence loss \(\ell^{KL}(p_i, f_i^t(\overline{W}))\) for each \(i\) by the distance between the corresponding logits, \(|z_i - f_i^t(\overline{W})|\). Since \(f_i^t(\overline{W}) = \frac{1}{\sqrt{m}} \langle \overline{\phi}^t_i, \overline{W} \rangle\), we can write at \(t = 0\):
\[
f_i^0(\overline{W}) = \frac{1}{\sqrt{m}} \langle \overline{\phi}^0_i, \overline{W} \rangle.
\]

Now, choosing \(\overline{W} = W(0) + U\) and applying the triangle inequality, we obtain:
\[
|z_i - f_i^t(\overline{W})| \leq |z_i - f_i^0(U)| + |f_i^t(U) - f_i^0(U)| + |f_i^t(W(0)) - f_i^0(W(0))| .
\]

From Lemma~\ref{lemma:sub sample}, we know that \(\|z_i - f_i^0(U)\|_2\) is small under the sufficient conditions. Therefore, it remains to show that the terms \(|f_i^t(U) - f_i^0(U)|\) and $|f_i^t(W(0)) - f_i^0(W(0))|$ are both small for each \(i\) with high probability.

\item \textbf{Bounding $|f_i^t(U) - f_i^0(U)|$ and $|f_i^t(W(0)) - f_i^0(W(0))|$:}
For the final part of the proof, we establish in Lemma~\ref{lemma:NTK approx error} that the terms \(|f_i^t(U) - f_i^0(U)|\) and $|f_i^t(W(0)) - f_i^0(W(0))|$ are both small with high probability under our PGD setting. Combining this result with previous steps, we conclude that as long as \(m\) is of the order \(O\left(\frac{1}{\epsilon}\right)\), and under mild sufficient conditions on \(T\) (the number of iterations) and \(\eta\) (the step size), the soft-label loss averaged over all iterations is upper-bounded by \(\epsilon\), for any \(\epsilon > 0\).

\end{enumerate}
\subsection{Proof of Lemma \ref{lemma:sub sample}}
 Under Assumption~\ref{ass:RKHS}, the random initialization of the neural network acts as a feature map that separates the dataset with a margin of order \(O\left(\gamma\right)\), provided the number of neurons is of the order \(O\left(\frac{1}{\gamma^2}\right)\). Separability of initial features is established in Lemma 2.3 of \citep{ji2020polylogarithmic} for random initialization of the parameters. The proof for the symmetric random initialization is a slight modification of the proof for random initialization is presented after the lemma for completeness.

     We can write:
\begin{align*}
    f^0_i(U) = 2\frac{1}{\sqrt{m}}\sum_{j=1}^{m/2} a_j \phi_i(W_j(0))^{\top} \frac{a_j}{\sqrt{m}}v(W_j(0)) = \frac{2}{m}\sum_{j=1}^{m/2}\phi_i(W_j(0))^{\top}v(W_j(0))
\end{align*}
Now as $W_j(0)$ are independently drawn from $\mathcal{N}(0,I_d)$ for $j \in [1,m/2]$, using Hoeffding inequality, we can write:
Over the randomly symmetric initialization , under assumption \ref{ass:RKHS}, for each $i \in [1,n]$ we have with probability $\geq 1-\frac{\delta}{n}$
    \begin{equation*}
        |f_i^0(U) - z_i^v| \leq \frac{1}{\sqrt{m}}\sqrt{2\ln(2n/\delta)} 
    \end{equation*}
    Using Union bound over all samples on the above probabilistic event gives the Lemma \ref{lemma:sub sample}

This result provides an important intuition for our subsequent proof. Specifically, the linear function given by \(f_i^0(W)\) at \(W = U\) can approximate the soft logit \(z_i\) for each \(i\) with high probability when \(m\) is sufficiently large. The complete proof further demonstrates that PGD updates ensure the soft loss, averaged over epochs, remains small even for the non-linear neural network.

\subsection{Convergence Behavior of the Soft Label Surrogate Loss}
Let the average entropy of the soft labels, averaged over all data samples, be given by:
\[
H = \frac{1}{n}\sum_{i=1}^{n}H(p_i),
\]
where 
\[
H(p_i) = -p_i \ln(p_i) -(1-p_i) \ln(1-p_i), \quad \forall i \in [n].
\]
Recall the definition of \(R^{t,KL}(W)\) as:
\[
R^{t,KL}(W) \coloneqq \frac{1}{n} \sum_{i=1}^n \ell^{KL}(p_i, f_i^t(W)).
\] 

The convergence behavior of the soft-label surrogate loss \(R^{KL}(W)\) under projected gradient descent (PGD) with projection radius \(B\) is characterized by the following lemma:

\begin{lemma}\label{lemma:descent soft}
    Let \(\overline{W}\) be any feasible weight matrix for the PGD with projection radius \(B\). That is, it satisfies \(\|\overline{W}_j- W_j(0)\|_2 \leq \frac{B}{\sqrt{m}}, \forall j \in [m]\). Then, the following holds for each iteration \(t\):
    \begin{align}
        \left(2\eta - \eta^2\right) R^{KL}(W(t)) & \leq \|W(t) - \overline{W}\|_{\mathcal{F}}^2 - \|W(t+1) - \overline{W}\|_{\mathcal{F}}^2 + 2\eta R^{t,KL}(\overline{W}) + \eta^2 H.
    \end{align}
    Moreover, it enjoys the following convergence of the surrogate loss function over \(T\) iterations:
    \begin{align}
        \frac{\left(2\eta - \eta^2\right)}{T}\sum_{t < T} R^{KL}(W(t)) & \leq \frac{\|W(0) - \overline{W}\|_{\mathcal{F}}^2 - \|W(T) - \overline{W}\|_{\mathcal{F}}^2}{T} + \frac{2\eta}{T} \sum_{t < T} R^{t,KL}(\overline{W}) + \eta^2 H,
    \end{align}
    using the fact that when $\eta \leq 1$, it implies:
    \begin{align}\label{eq:descent soft}
        \frac{1}{T}\sum_{t < T} R^{KL}(W(t)) & \leq \frac{B^2}{\eta T} + \frac{2}{T}\sum_{t < T} R^{t,KL}(\overline{W}) + \eta H.
    \end{align}
\end{lemma}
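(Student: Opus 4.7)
The plan is to run the standard projected-gradient descent analysis on the linearized surrogate $R^{t,KL}$, exploiting two structural features of our problem. First, the feasible set $S_B$ is a product of Euclidean balls in the hidden-weight space and is therefore convex, so the Euclidean projection is non-expansive; this gives $\|W(t+1)-\overline{W}\|_{\mathcal{F}}^2 \le \|\hat{W}(t+1)-\overline{W}\|_{\mathcal{F}}^2$ for any feasible $\overline{W}$, and expanding the right-hand side yields
\begin{equation*}
\|W(t+1)-\overline{W}\|_{\mathcal{F}}^2 \le \|W(t)-\overline{W}\|_{\mathcal{F}}^2 - 2\eta \langle \nabla R^{KL}(W(t)),\, W(t)-\overline{W}\rangle + \eta^2 \|\nabla R^{KL}(W(t))\|_{\mathcal{F}}^2.
\end{equation*}
Second, because the ReLU derivative taken at $W_j(t)$ is exactly the feature $\phi_i(W_j(t))$ defining $\overline{\phi}_i^t$, we have $\nabla_W R^{KL}(W(t)) = \nabla_W R^{t,KL}(W)\big|_{W=W(t)}$ and $R^{KL}(W(t)) = R^{t,KL}(W(t))$. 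The map $R^{t,KL}(W)$ is convex in $W$, being the composition of the affine map $W\mapsto \langle \overline{\phi}_i^t,W\rangle/\sqrt{m}$ with $\ell^{KL}(p_i,\cdot)$, whose second derivative in the logit is $\mu(f)(1-\mu(f))\ge 0$. Convexity at $W(t)$ then gives $\langle \nabla R^{t,KL}(W(t)),\,W(t)-\overline{W}\rangle \ge R^{KL}(W(t)) - R^{t,KL}(\overline{W})$, which is the lower bound needed on the cross-term. Note that the reference value that appears is $R^{t,KL}(\overline{W})$, not $R^{KL}(\overline{W})$, because the linearization is anchored at $W(t)$.

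The heart of the argument is bounding the gradient-norm term. Writing $\nabla_{W_j} R^{KL}(W(t)) = \frac{1}{n}\sum_i (\mu(f_i(W(t)))-p_i)\,\frac{a_j}{\sqrt{m}}\phi_i(W_j(t))$, I would apply Jensen's inequality to the outer average over $i$, use $\|\phi_i(W_j(t))\|_2\le \|x_i\|_2\le 1$ and $a_j^2=1$ to collapse the sum over $j$, and conclude
\begin{equation*}
\|\nabla R^{KL}(W(t))\|_{\mathcal{F}}^2 \le \frac{1}{n}\sum_{i=1}^n (\mu(f_i(W(t)))-p_i)^2.
\end{equation*}
The decisive step is the pointwise inequality $(\mu(f)-p)^2 \le \ell^{KL}(p,f)+H(p)$, which I would obtain from Pinsker $(\mu-p)^2 \le \mathrm{KL}(p\|\mu)/2$ together with $H(p)\ge 0$, or, equivalently, from the $\tfrac14$-smoothness of the cross-entropy in the logit combined with $\mathrm{CE}(p,\mu)=\mathrm{KL}(p\|\mu)+H(p)$. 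Averaging over $i$ then yields $\|\nabla R^{KL}(W(t))\|_{\mathcal{F}}^2 \le R^{KL}(W(t))+H$, which matches precisely the coefficient structure the lemma's right-hand side anticipates.

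Putting the three ingredients together and rearranging gives the per-step inequality
\begin{equation*}
(2\eta-\eta^2)\, R^{KL}(W(t)) \le \|W(t)-\overline{W}\|_{\mathcal{F}}^2 - \|W(t+1)-\overline{W}\|_{\mathcal{F}}^2 + 2\eta\, R^{t,KL}(\overline{W}) + \eta^2 H.
\end{equation*}
Summing from $t=0$ to $T-1$ telescopes the first difference; dropping the non-positive $-\|W(T)-\overline{W}\|_{\mathcal{F}}^2$ and using feasibility $\overline{W}\in S_B$ to bound $\|W(0)-\overline{W}\|_{\mathcal{F}}^2 \le m\cdot (B/\sqrt{m})^2 = B^2$, then dividing by $T$ and using $(2\eta-\eta^2)\ge \eta$ whenever $\eta\le 1$, delivers the averaged bound in \eqref{eq:descent soft}. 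The main obstacle I anticipate is the pointwise inequality $(\mu-p)^2 \le \ell^{KL}(p,f)+H(p)$; the non-expansive-projection, convexity, and telescoping steps are standard PGD bookkeeping, and the gradient identity between $\nabla R^{KL}$ and $\nabla R^{t,KL}$ at $W(t)$ is immediate once the ReLU derivative convention at the non-differentiable point is fixed.
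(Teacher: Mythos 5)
Your proposal is correct and follows essentially the same route as the paper's proof: non-expansiveness of the projection onto the convex set $S_B$, convexity of $\ell^{KL}$ in the logits to lower-bound the cross term by $R^{KL}(W(t)) - R^{t,KL}(\overline{W})$, a bound of the form $\|\nabla_W R^{KL}(W(t))\|_{\mathcal{F}}^2 \leq R^{KL}(W(t)) + H$, and then telescoping with $\|W(0)-\overline{W}\|_{\mathcal{F}}^2 \leq B^2$ and $2\eta - \eta^2 \geq \eta$. The only (harmless) difference is in the gradient-norm step: you bound $\|\nabla_W R^{KL}\|_{\mathcal{F}}^2$ directly via Jensen and Pinsker's inequality $(\mu(f)-p)^2 \leq \tfrac{1}{2}\ell^{KL}(p,f)$, whereas the paper first uses $\|\nabla_W R^{KL}\|_{\mathcal{F}} \leq 1$ and then the pointwise bound $|\mu(f)-p| \leq \ell^{KL}(p,f) + H(p)$; both yield the same $\eta^2(R^{KL}(W(t)) + H)$ term, and your version is arguably the cleaner of the two.
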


\begin{proof}
The proof follows a technique introduced in prior works like \citep{ji2020polylogarithmic,allen2019can} for proving the convergence of gradient descent. The key idea leverages the fact that while the soft-label loss is non-convex with respect to the parameter \(W\), it is convex with respect to the logits \(f_i(W)\).

Let us fix a weight \(\overline{W}\) from the feasible set \(S_B\). Using the steps of PGD, we can write for each \(t \geq 0\):
\begin{align*}
    \|W(t+1) - \overline{W}\|_{\mathcal{F}}^2 
    & \underbrace{\leq}_{(a)} \|\hat{W}(t+1) - \overline{W}\|_{\mathcal{F}}^2 \\
    & \underbrace{=}_{(b)} \|W(t) - \overline{W} - \eta \Delta_W R^{KL}(W(t))\|_{\mathcal{F}}^2 \\
    & = \|W(t) - \overline{W}\|_{\mathcal{F}}^2 - 2\eta \langle \Delta_W R^{KL}(W(t)), W(t) - \overline{W} \rangle \\
    & \quad + \eta^2 \|\Delta_W R^{KL}(W(t))\|_{\mathcal{F}}^2.
\end{align*}

Here step \((a)\) follows from the contraction property of the projection operation in the Frobenius norm. Step \((b)\) uses the weight update rule from the descent step of PGD.

Recall the definition of \(\nabla_{W_j} R^{KL}(W)\), which represents the gradient of the risk with respect to the weight of the \(j^{\text{th}}\) neuron:
\[
\nabla_{W_j} R^{KL}(W) = \frac{1}{n} \sum_{i=1}^n \nabla_{f} \ell^{KL}(p_i, f_i(W)) \nabla_{W_j} f_i(W),
\]
where:
\[
\nabla_{W_j} f_i(W) = \frac{a_j}{\sqrt{m}} \phi_i(W_j) = \frac{a_j}{\sqrt{m}} \indicator(W_j^\top x_i > 0).
\]

Given that \(\|x_i\|_2 \leq 1, \forall i \in [n]\), we can write the following bounds for all \(i \in [n]\):
\[
\|\nabla_{W_j} f_i(W)\|_2 \leq \frac{1}{\sqrt{m}}, \quad \|\nabla_{W} f_i(W)\|_{\mathcal{F}} \leq 1.
\]

The gradient of the loss function \(\ell^{KL}\) with respect to the network output is given by:
\[
\nabla_{f} \ell^{KL}(p_i, f_i(W)) = -p_i \frac{1}{\tau \left(1 + e^{f_i(W)}\right)} + (1 - p_i) \frac{1}{ \left(1 + e^{-f_i(W)}\right)},
\]
which can be rewritten as:
\[
\nabla_{f} \ell^{KL}(p_i, f_i(W)) = \frac{1}{\tau} \left(\frac{1}{1 + e^{-f_i(W)}} - p_i\right).
\]

We can write:
\begin{align*}
   \langle \Delta_W R^{KL}(W(t)) , (W(t)-\overline{W})\rangle & = \frac{1}{n}\sum_{i=1}^{n}\langle \nabla_{f} \ell^{KL}(p_i,f_i(W(t))) \Delta_W f_i(W(t)) , (W(t)-\overline{W})\rangle \\
   & = \frac{1}{n}\sum_{i=1}^{n}\nabla_{f} \ell^{KL}(p_i,f_i(W(t)))\langle  \nabla_W f_i(W(t)) , (W(t)-\overline{W})\rangle \\
   & \underbrace{=}_{(c)} \frac{1}{n}\sum_{i=1}^{n}\nabla_{f} \ell^{KL}(p_i,f_i(W(t)))(f_i(W(t))-f^t_i(\overline{W})\\
   & \underbrace{\geq}_{(d)} R^{KL}(W(t)) - R^{t,KL}(\overline{W})
\end{align*}
Here, \((c)\) follows from the definitions of \(f\) and \(\nabla_{W} f_i(W)\), while \((d)\) uses the convexity of \(\ell^{KL}\) with respect to \(f\).

Using the inequality \(\|\nabla_W f_i(W(t))\|_{\mathcal{F}} \leq 1\) and the subadditivity and absolute homogeneity of the Frobenius norm, we have:
\[
\|\nabla_W R^{KL}(W(t))\|_{\mathcal{F}} \leq \frac{1}{n} \sum_{i=1}^{n} |\nabla_{f} \ell^{KL}(p_i, f_i(W(t)))| \underbrace{\leq}_{(e)} 1,
\]
where \((e)\) follows from the relation:
\[
\nabla_{f} \ell^{KL}(p_i, f_i(W)) =  \frac{1}{1 + e^{-f_i(W)}} - p_i.
\]

Then we can write, 
\begin{align*}
     2\eta R^{KL}(W(t)) & \leq \norm{W(t)-\overline{W}}_{\mathcal{F}}^2 -\norm{W(t+1)-\overline{W}}_{\mathcal{F}}^2 + 2\eta R^{t,KL}(\overline{W}) \\
     & + \eta^2\norm{\Delta_W R^{KL}(W(t))}_{\mathcal{F}}^2
\end{align*}
Now as $\|\nabla_W R^{KL}(W(t))\|_{\mathcal{F}} \leq 1$, 
\begin{align*}
     2\eta R^{KL}(W(t)) & \leq \norm{W(t)-\overline{W}}_{\mathcal{F}}^2 -\norm{W(t+1)-\overline{W}}_{\mathcal{F}}^2 + 2\eta R^{t,KL}(\overline{W}) \\
     & + \eta^2\norm{\Delta_W R^{KL}(W(t))}_{\mathcal{F}}
\end{align*}

Using the identity $\ln(x) \leq 1+x$, the following inequality is derived:
\[
|\nabla_{f} \ell^{KL}(p_i, f_i(W))| \leq \ell^{KL}(p_i, f_i(W)) + H(p_i)
\]
implying,
\[
\|\nabla_W R^{KL}(W(t))\|_{\mathcal{F}} \leq \frac{1}{n} \sum_{i=1}^{n} |\nabla_{f} \ell^{KL}(p_i, f_i(W(t)))| \leq R^{KL}(W(t)) + H.
\]

Putting it all together, we write
\begin{align}
    \left(2\eta-\eta^2\right) R^{KL}(W(t)) & \leq \norm{W(t)-\overline{W}}_{\mathcal{F}}^2 -\norm{W(t+1)-\overline{W}}_{\mathcal{F}}^2 + 2\eta R^{t,kl}(\overline{W})  + \eta^2 H.
\end{align}
Use of a telescoping sum would give us
\begin{align}
    \frac{\left(2\eta-\eta^2\right)}{T}\sum_{t < T} R^{KL}(W(t)) & \leq  \frac{\norm{W(0)-\overline{W}}_{\mathcal{F}}^2 -\norm{W(T)-\overline{W}}_{\mathcal{F}}^2}{T} + \frac{2\eta}{T} \sum_{t < T}R^{t,KL}(\overline{W})  + \eta^2 H \\
    & \leq \frac{B^2}{T} + \frac{2\eta}{T}\sum_{t < T} R^{t,KL}(\overline{W})  + \eta^2 H  
\end{align}
\end{proof}

\subsection{Bounding \texorpdfstring{$|f_i^t(U) - f_i^0(U)|$}{} and \texorpdfstring{$|f_i^t(W(0)) - f_i^0(W(0))|$}{}}
In this subsection, we bound the expressions $|f_i^t(W) - f_i^0(W)|$ and $|f_i^t(W(0)) - f_i^0(W(0))|$ for any $W \in S_B$ and any $t \leq T$. The analysis in this subsection is inspired from the neural tangent kernel literature (see \citep{jacot2020neural,chen2020generalized}). 
\begin{lemma}\label{lemma:NTK approx error}
 Let $W(t)$ be the weight after $t^{\emph{th}}$ iteration of PGD with projection radius $B$. Then, with probability $\geq 1-\delta$, the followings is satisfied for all $t \geq 0$,
\begin{align}
    |f_i^t(W(0))-f_i(W(0))| \leq \frac{1}{\sqrt{m}}\left(\frac{\sqrt{2}}{c \sqrt{\pi}}B^2 + \sqrt{\ln\left(\frac{2n}{\delta}\right)}B \right) , \forall i \in [1,n].
\end{align}
Similarly, when evaluated at any weight $W'$ such that $\|W'_j\|_2 \leq \frac{D}{\sqrt{m}}$ for some $D > 0$, the following holds with probability at least $1-\delta$
\begin{align}
    |f_i^t(W')-f_i^0(W')| \leq \frac{1}{\sqrt{m}}\left(\frac{\sqrt{2}}{c \sqrt{\pi}}BD + \sqrt{\ln\left(\frac{2n}{\delta}\right)}D \right) , \forall i \in [1,n].
\end{align}
\end{lemma}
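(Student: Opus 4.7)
The key observation is that the difference $f_i^t(W) - f_i^0(W)$ only involves those neurons whose ReLU activation pattern on $x_i$ has switched between iteration $0$ and iteration $t$. I would write
\begin{equation*}
f_i^t(W) - f_i^0(W) = \frac{1}{\sqrt{m}} \sum_{j \in S_i^t} a_j \bigl[\indicator(W_j(t)^\top x_i \geq 0) - \indicator(W_j(0)^\top x_i \geq 0)\bigr] W_j^\top x_i,
\end{equation*}
where $S_i^t$ is the set of indices at which the two indicators disagree. Because $W(t) \in S_B$, whenever a sign flip occurs for neuron $j$ one of $W_j(0)^\top x_i$, $W_j(t)^\top x_i$ is non-negative and the other is negative, so $|W_j(0)^\top x_i| \leq |(W_j(0)-W_j(t))^\top x_i| \leq \|W_j(0)-W_j(t)\|_2 \|x_i\|_2 \leq B/\sqrt{m}$. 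Hence $S_i^t \subseteq S_i := \{j \in [m] : |W_j(0)^\top x_i| \leq B/\sqrt{m}\}$, a set that depends only on the initialization and not on $t$, so the resulting bound will automatically be uniform in $t$.

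\textbf{Controlling $|S_i|$.} Since $W_j(0) \sim \mathcal{N}(0, I_d)$, the inner product $W_j(0)^\top x_i$ is a centered Gaussian with variance $\|x_i\|_2^2 \geq c^2$. Bounding the density of such a Gaussian by its value at the origin gives
\begin{equation*}
\Pr\bigl(|W_j(0)^\top x_i| \leq B/\sqrt{m}\bigr) \leq \frac{B}{\sqrt{m}} \cdot \frac{\sqrt{2}}{c\sqrt{\pi}},
\end{equation*}
so $\mathbb{E}|S_i| \leq (\sqrt{2}/(c\sqrt{\pi})) B \sqrt{m}$. The symmetric initialization couples the indicators $\indicator(j \in S_i)$ within pairs, but the $m/2$ pairs are i.i.d., so Hoeffding's inequality applied to the pair-level Bernoullis yields $|S_i| \leq \mathbb{E}|S_i| + \sqrt{m \ln(2n/\delta)}$ with probability at least $1 - \delta/n$; a union bound over $i \in [n]$ makes this uniform in $i$.

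\textbf{Putting it together and main obstacle.} For the first bound, $|W_j(0)^\top x_i| \leq B/\sqrt{m}$ on $S_i$, so
\begin{equation*}
|f_i^t(W(0)) - f_i^0(W(0))| \leq \frac{1}{\sqrt{m}} \sum_{j \in S_i} |W_j(0)^\top x_i| \leq \frac{|S_i| \, B}{m},
\end{equation*}
and substituting the high-probability bound on $|S_i|$ produces the advertised estimate $\frac{1}{\sqrt{m}}\bigl(\frac{\sqrt{2}}{c\sqrt{\pi}} B^2 + \sqrt{\ln(2n/\delta)}\,B\bigr)$. For the second bound, the hypothesis $\|W_j'\|_2 \leq D/\sqrt{m}$ gives $|W_j'^{\top} x_i| \leq D/\sqrt{m}$, and the same computation with the outer factor of $B$ replaced by $D$ delivers the analogous inequality. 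The main technical point to get right is the concentration step: the symmetric initialization correlates the indicators within pairs, so one must group them into $m/2$ independent pair-level Bernoullis before applying Hoeffding; this only affects absolute constants but must be handled cleanly.
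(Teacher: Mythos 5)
Your proof is correct and follows essentially the same route as the paper's: decompose the difference over the set of neurons whose activation pattern flips, bound each such term by $B/\sqrt{m}$ (resp. $D/\sqrt{m}$), control the number of flipped neurons via the Gaussian anti-concentration bound $\Pr(|W_j(0)^\top x_i|\le B/\sqrt{m})\le \sqrt{2}B/(c\sqrt{\pi m})$ plus Hoeffding over the $m/2$ independent pairs, and finish with a union bound over $i$. Your one refinement --- noting explicitly that $S_i^t$ is contained in the initialization-dependent set $S_i$, which makes the bound uniform in $t$ --- is an idea the paper uses only implicitly, and your constants actually match the lemma statement more closely than the paper's own derivation does.
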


\begin{proof}
Let us first fix the input data index $i$. Define the set $S_i^t : \{j \in [1,m] : \indicator_{x_i^{\top}W_j(t)\}>0} \neq \indicator_{x_i^{\top}W_j(0)\}>0}\}$. Based on this definition, we can write,
\begin{align*}
    |f_i^t(W)-f_i^0(W)| & = \left| \frac{1}{\sqrt{m}}\sum_{j=1}^m a_j (\indicator_{x_i^{\top}W_j(t) > 0}- \indicator_{x_i^{\top}W_j(0) > 0})x_i^{\top}W_j\right| \\
    & = \left| \frac{1}{\sqrt{m}}\sum_{j \in S_i^t}a_j x_i^{\top}W_j\right|. 
\end{align*}
Now, from the definition of the set $S^t_i$ for all $j \in S^t_i$, we can write the following:
\begin{equation}\label{eq:mismatch_consequence}
    |x_i^{\top}W_j| \leq |x_i^{\top}(W_j-W_j(0))| \leq \frac{B}{\sqrt{m}}.
\end{equation}
Also, from the condition on $W'$ such that $\|W'_j\|_2 \leq \frac{D}{\sqrt{m}}$, as $\norm{x_i}\leq 1, \forall i \in [1,n]$
\begin{align*}
         |x_i^{\top}W_j| & \leq \frac{D}{\sqrt{m}}.
\end{align*}
Hence,
\begin{align*}
    |f_i^t(W(0))-f_i(W(0))| & \leq |S^t_i|\frac{B}{m}.\\
    |f_i^t(U)-f_i^0(U)| & \leq |S^t_i|\frac{D}{m}.
\end{align*}
Next, we give a probabilistic upper bound on the cardinality of set $S^t_i$, $|S^t_i|$ over the symmetric random initialization. Let $sgn(g)$ denote the sign of $g$ for $h \in \mathcal{R}$.  
We can write :
\begin{align*} \label{eq:cardinality_S_t}
    |S^{t}_i| & = \sum_{j=1}^{m/2} \indicator_{sgn(x_i^{\top}W_j(t)\}) \neq sgn(x_i^{\top}W_j(0))}+\sum_{j=m/2}^{m} \indicator_{sgn(x_i^{\top}W_j(t)\}) \neq sgn(x_i^{\top}W_j(0))}.
\end{align*}
From the equation \eqref{eq:mismatch_consequence}, the following holds true for each $j \in [m]$:
\begin{align*}
    \mathbb{P}(sgn(x_i^{\top}W_j) \neq sgn(x_i^{\top}W_j(0))) & \leq \mathbb{P}\left(x_i^{\top}W_j(0) \leq \frac{B}{\sqrt{m}}\right).
\end{align*}

Now, for any $j$, $x_i^{\top}W_j(0)$ is a linear combination of independent random variables drawn from $\mathcal{N}(0,1)$ and, $\norm{x_i}_2 \leq 1$ hence, $x_i^{\top}W_j(0)$ is a normal random variable drawn from $\mathcal{N}(0,\norm{x_i}_2^2)$. Hence, we can write for each $j \in [m]$, 
\begin{align}
    \mathbb{P}\left(x_i^{\top}W_j(0) \leq \frac{B}{\sqrt{m}}\right) \leq \frac{\sqrt{2}B}{c \sqrt{\pi m}}.
\end{align}

Hence, from Hoeffding inequality used on bounded independent random variables, we can write with probability $\geq 1 - \frac{\delta}{2n}$:

\begin{align*}
    \sum_{j=1}^{m/2} \indicator_{sign(x_i^{\top}W_j(t)\}) \neq sign(x_i^{\top}W_j(0))} \leq \frac{m}{2}  \frac{\sqrt{2}B}{c \sqrt{\pi m}} +m\sqrt{\frac{\ln(2n/\delta)}{m}}
\end{align*}

Now because of the symmetric random initialization, using the union bound on probabilities on the expression in equation \eqref{eq:cardinality_S_t}, we can write: with probability $\geq 1-\frac{\delta}{n}$,
\begin{align*}
    |S^t_i| \leq   \frac{\sqrt{2}Bm}{c \sqrt{\pi m}} + 2m\sqrt{\frac{\ln(2n/\delta)}{m}}
\end{align*}

Putting it all together : for each $i \in [1,n]$, with probability $\geq 1-\frac{\delta}{n}$
\begin{align*}
|f_i^t(W(0))-f_i(W(0))| & \leq \left(\frac{\sqrt{2}Bm}{c \sqrt{\pi m}} + 2m\sqrt{\frac{\ln(2n/\delta)}{m}}\right)\frac{B}{m} \\
& = \frac{1}{\sqrt{m}}\left(\frac{\sqrt{2}}{c \sqrt{\pi}}B^2 + 2\sqrt{\ln\left(\frac{2n}{\delta}\right)}B \right)\\
    |f_i^t(U)-f_i^0(U)| & \leq \left(\frac{\sqrt{2}Bm}{c \sqrt{\pi m}} + 2m\sqrt{\frac{\ln(2n/\delta)}{m}}\right)\frac{D}{m}\\
    & = \frac{1}{\sqrt{m}}\left(\frac{\sqrt{2}}{c \sqrt{\pi}}BD + 2\sqrt{\ln\left(\frac{2n}{\delta}\right)}D \right)
\end{align*}
Now using the union bound on the above probability events over all example index $i \in [1,n]$, we get the Lemma \ref{lemma:NTK approx error}.

\end{proof}

Now, putting $W' = U$ in the Lemma \ref{lemma:NTK approx error}, we get the statement:
the following holds with probability at least $1-\delta$
\begin{align}
    |f_i^t(U)-f_i^0(U)| \leq \frac{1}{\sqrt{m}}\left(\frac{\sqrt{2}}{c \sqrt{\pi}}B + \sqrt{\ln\left(\frac{2n}{\delta}\right)} \right) , \forall i \in [1,n].
\end{align}
\subsection{Final Steps of the Proof of Theorem \ref{thm:soft label kl}}

From Lemma \ref{lemma:descent soft}, we have the following convergence guarantee of the Projected Gradient Descent(PGD) for the surrogate loss for soft label training:
\begin{equation}
    \frac{1}{T}\sum_{t < T} R^{KL}(W(t)) \leq \frac{B^2}{\eta T} + \frac{2}{T}\sum_{t < T} R^{t,kl}(\overline{W})  + \eta H 
\end{equation}

 We chose the weight $\overline{W}$ as $\overline{W} = W(0)+U$ where $U_j = \frac{a_j}{\sqrt{m}}v(W_j(0))$. Under the sufficient conditions described next, we achieve the target surrogate loss $\frac{1}{T}\sum_{t < T}R^{KL}(W(t)) \leq \beta$ for some $\beta > 0$.
\begin{enumerate}
    \item 
    \begin{equation}
        \eta \leq \frac{\beta}{3H}.
    \end{equation}
    \item 
    \begin{equation}
        \frac{1}{\eta T} \leq \frac{\epsilon}{3}.
    \end{equation}
    This is satisfied when : 
    \begin{equation}
        T \geq \frac{9HB^2}{\beta^2 }.
    \end{equation}
    \item 
    \begin{equation}
        \frac{2}{T}\sum_{t\leq T}R^{t,KL}(\overline{W}) \leq \frac{\beta}{3}.
    \end{equation}
    This is satisfied when 
    $\ell^{KL}(p_i,\mu(f_i^t(\overline{W}))) \leq \frac{\beta}{6}$ for each $i \in [1,n]$.
\end{enumerate}

Let us first fix the input sample index $i$. We focus to upper-bound $\ell^{KL}(p_i,\mu(f_i^t(\overline{W})))$ for the correctly chosen $\overline{W}$ for each $i \in [n]$. For this purpose, we use the following version of the reverse Pinsker's inequality from Lemma 4.1 of \cite{G_tze_2019}: for $p$ and $q$ being parameters of different Bernoulli distribution with $0< p,q < 1$
\begin{equation*}
    2|p-q|^2 \leq D_{KL}(p||q) \leq \frac{2}{\min(q,1-q)}|p-q|^2.
\end{equation*}

Substituting $p = p_i$ and $q = \mu(f_i^t(\overline{W}))$, we get
\begin{align*}
    \ell^{KL}(p_i,f_i^t(\overline{W})) & \leq \frac{2}{\min(\mu(f_i^t(\overline{W})),1-\mu(f_i^t(\overline{W})))}|p_i-\mu(f_i^t(\overline{W}))|^2.
\end{align*}

The sigmoid function $\mu(g) = \frac{1}{1+e^{-g}}$ is a 1-Lipschitz function, we can write,
\begin{equation*}
    |p_i-\mu(f_i^t(\overline{W}))| = |\mu(z_i)-\mu(f_i^t(\overline{W}))| \leq |z_i - f_i^t(\overline{W})|.
\end{equation*}

Also, 
\begin{align*}
    \frac{2}{\min(\mu(f_i^t(\overline{W})),1-\mu(f_i^t(\overline{W})))} & = \frac{2}{\min\left(\frac{1}{1+e^{-f_i^t(\overline{W})}},\frac{1}{1+e^{f_i^t(\overline{W})}}\right)} \\
    & = 2(1+e^{|f_i^t(\overline{W})|}).
\end{align*}

Putting it together, we can write,
\begin{equation}\label{eq:soft label kl loss}
    l^{kl}(p_i,f_i^t(W))  \leq 2(1+e^{|f^t_i(W)|})|z_i-f_i^t(W)|^2
\end{equation}

Now from Lemma \ref{lemma:sub sample},  we have with probability $\geq 1-\delta$
    \begin{equation*}
        |f_i^0(U) - z_i| \leq \frac{1}{\sqrt{m}}\sqrt{2\ln(2n/\delta)} ; \forall i \in [1,n].
    \end{equation*}
    
Using triangle inequality, and using $\overline{W} = W(0)+U$, we have,
\begin{equation*}
    |z_i-f_i^t(\overline{W})| \leq |f_i^0(U) - z_i| + |f_i^0(U)-f_i^t(U)| + |f_i^0(W(0))-f_i^t(W(0))|.
\end{equation*}

Next, we use the Lemma \ref{lemma:NTK approx error} to bound the expression $|f_i^0(U)-f_i^t(U)| + |f_i^0(W(0))-f_i^t(W(0))|$. From Lemma \ref{lemma:NTK approx error} (putting $W'=U$), using a union bound, we can write with probability at least $1-2\delta$, simultaneously for all $i \in [1,n]$. 
\begin{align*}
    |f_i^0(U)-f_i^t(U)| + |f_i^0(W(0))-f_i^t(W(0))| 
   \leq  \frac{2}{\sqrt{m}}\left(\frac{\sqrt{2}}{c \sqrt{\pi}}B^2 + 2\sqrt{\ln\left(\frac{2n}{\delta}\right)}B \right) 
\end{align*}
Here we assume $B \geq 1$. In Theorem \ref{thm:soft label kl}, we set $B = 1$. This gives us

\begin{equation*}
    |f_i^t(\overline{W})-z_i| \leq \frac{2}{\sqrt{m}}\left(\frac{\sqrt{2}}{c \sqrt{\pi}}B^2 + (2B+1)\sqrt{\ln\left(\frac{2n}{\delta}\right)}\right)
\end{equation*}
 
 We upper-bound the term $(1+e^{|f^t_i(W)|})$. Notice the following upper-bound on $f_i^0(\overline{W})$.
\begin{align}
    |f_i^0(\overline{W})|& \underbrace{\leq}_{(f)}  \left| \frac{1}{\sqrt{m}}\sum_{j=1}^m a_j \indicator_{x_i^{\top}W_j(0) > 0}x_i^{\top}U_j\right|\\
    & \underbrace{=}_{(g)} \left| \frac{1}{m}\sum_{j=1}^m \indicator_{x_i^{\top}W_j(0) > 0}x_i^{\top}v(W(0)_j)\right|  \\
    & \underbrace{\leq}_{(h)} 1  
\end{align}
where, $(f)$ is using the fact that $f_i(W(0))= 0 ,\forall i \in [1,n]$ , $(g)$ is from the definition of $U$, and, $(h)$ uses $\norm{x_i}_2\leq 1$ and $\norm{v(W(0)_j)}_2 \leq 1$ . This implies, $(1+e^{|f^t_i(W)|}) \leq 1+e^2$.

Recall from Equation \ref{eq:soft label kl loss},
\begin{equation}
    l^{kl}(p_i,f_i^t(W))  \leq 2\underbrace{(1+e^{|f^t_i(W)|})}_{I}\underbrace{|z_i-f_i^t(W)|^2}_{II}
\end{equation}

 Putting the above arguments together, under the Following sufficient condition:
        \begin{equation}\label{eq:condition_soft_m_proof}
            m \geq \frac{96}{(1+e^2)\beta}\left(\sqrt{\frac{2}{\pi}}\frac{B^2}{c} + (2B+1)\sqrt{\ln\left(\frac{2n}{\delta}\right)} \right)^2
        \end{equation}
   
    We can write 
    \begin{enumerate}
        \item $I \leq 1+e^2$
        \item $II \leq \frac{\beta}{6}$
    \end{enumerate}

Finally adding the conditions
   \begin{equation*}
        \eta \leq \frac{\beta}{3H}.
    \end{equation*} 
    \begin{equation*}
        T \geq \frac{9HB^2}{\beta^2 }.
    \end{equation*}
along with putting $B = 1$ in condition \eqref{eq:condition_soft_m_proof} gives us the Theorem \ref{thm:soft label kl}.
\section{Proof of Corollary \ref{cor: soft label requirement indicator}} \label{appendix:proof of cor:soft label}
The Corollary follows from the Theorem \ref{thm:soft label kl} and the Lemma \ref{lemma:relating indicator to kl}. Let us first give a proof of Lemma \ref{lemma:relating indicator to kl}.

\subsection{Proof of Lemma \ref{lemma:relating indicator to kl}}
First, we use the Pinsker's inequality to upper-bound the classification loss $R(W)$ in terms of the surrogate loss $R^{KL}(W)$ in Lemma \ref{lemma:relating indicator to kl appendix} for any soft labels $p'_i$ satisfying $y_i \nu^{-1}(p_i') > 0$, $\forall i \in [1,n]$.
\begin{lemma}\label{lemma:relating indicator to kl appendix}
    If we assume $0< p_i' < 1$, and $y_i \nu^{-1}(p_i') > 0$, $\forall i \in [1,n]$\footnote{This assumption ensures the soft labels are correctly indicating the true class for inputs in context}, we can relate the classification loss $R(W)$ to the surrogate loss $R^{KL}(W)$ as:
    \begin{equation}
        R(W(t)) \leq \frac{1}{2\min_i\left|p'_i-\frac{1}{2}\right|^2}R^{kl}(W(t))
    \end{equation}
\end{lemma}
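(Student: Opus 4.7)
The plan is to use Pinsker's inequality to lower-bound $\ell^{KL}(p'_i, f_i(W))$ in terms of $|p'_i - \mu(f_i(W))|^2$, and then to argue that on each misclassified sample this distance in probability space is at least $|p'_i - 1/2|$, so the minimum over $i$ governs the conversion factor.

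Concretely, I would first recall Pinsker's inequality for two Bernoulli distributions with parameters $p$ and $q$ in $(0,1)$, namely
\begin{equation*}
    D_{KL}(p \,\|\, q) \geq 2(p-q)^2.
\end{equation*}
Applied with $p = p'_i$ and $q = \mu(f_i(W))$, and using that $\ell^{KL}(p'_i, f_i(W))$ is exactly this Bernoulli KL divergence, this gives
\begin{equation*}
    \ell^{KL}(p'_i, f_i(W)) \geq 2\bigl(p'_i - \mu(f_i(W))\bigr)^2.
\end{equation*}

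Next I would perform a short sign-based case analysis on the misclassification event. The assumption $y_i \nu^{-1}(p'_i) > 0$ means $p'_i > 1/2$ when $y_i = +1$ and $p'_i < 1/2$ when $y_i = -1$. On a misclassified sample (i.e.\ where the indicator in $R(W)$ is $1$, interpreting the definition as the usual $y_i f_i(W) \leq 0$), $\mu(f_i(W))$ lies on the wrong side of $1/2$ relative to $p'_i$. In either sign case, $1/2$ lies between $\mu(f_i(W))$ and $p'_i$, so
\begin{equation*}
    \bigl|p'_i - \mu(f_i(W))\bigr| \geq \bigl|p'_i - \tfrac{1}{2}\bigr|.
\end{equation*}
Combining with the Pinsker bound yields, on the misclassification event,
\begin{equation*}
    \ell^{KL}(p'_i, f_i(W)) \geq 2\bigl|p'_i - \tfrac{1}{2}\bigr|^2 \geq 2 \min_{j} \bigl|p'_j - \tfrac{1}{2}\bigr|^2.
\end{equation*}
Rearranging, the misclassification indicator is pointwise dominated by $\ell^{KL}(p'_i, f_i(W)) \big/ \bigl(2 \min_j |p'_j - 1/2|^2\bigr)$ (trivially true on correctly classified samples since the left side is $0$ and the right side is nonnegative).

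Finally, I would average over $i \in [1,n]$ and pull the $i$-independent denominator out of the sum to conclude
\begin{equation*}
    R(W) \leq \frac{1}{2 \min_i |p'_i - 1/2|^2} \, R^{KL}(W).
\end{equation*}
There is no real obstacle here; the only point that requires care is the case split in the middle step, making sure that the hypothesis $y_i \nu^{-1}(p'_i) > 0$ is used precisely to guarantee that the soft label and the network's predicted probability lie on opposite sides of $1/2$ whenever the sample is misclassified. Everything else is a direct application of Pinsker's inequality and monotonicity of the minimum.
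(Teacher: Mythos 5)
Your proposal is correct and follows essentially the same route as the paper's proof: Pinsker's inequality applied to the Bernoulli pair $(p'_i, \mu(f_i(W)))$, followed by the sign-based case analysis showing $|p'_i - \mu(f_i(W))| \geq |p'_i - \tfrac{1}{2}|$ on misclassified samples, then averaging. You actually spell out the final averaging and the (reasonable) reading of the indicator in $R(W)$ as the misclassification event more explicitly than the paper does.
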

\begin{proof}
    We use the Pinsker's inequality :
    \begin{equation}
        KL(p'_i||\mu(f_i(W(t)))) \geq 2 |p'_i-\mu(f_i(W(t)))|^2
    \end{equation}
    The followings are true when $y_i \mu_{\tau}^{-1}(p'_i) > 0, \forall i$:\\
   \textbf{Case 1: $y_i = +1$} When there is a misclassification by the neural network, : $p'_i \geq \frac{1}{2}$ and $\mu_{\tau}(f_i(W(t))) \leq \frac{1}{2}$, Hence, $|p'_i-\mu_{\tau}(f_i(W(t)))| \geq \left|p'_i-\frac{1}{2}\right|$\\
   \textbf{Case 2: $y_i = -1$} When there is a misclassification by the neural network,: $p'_i \leq \frac{1}{2}$ and $\mu_{\tau}(f_i(W(t))) \geq \frac{1}{2}$, Hence, $|p'_i-\mu_{\tau}(f_i(W(t)))| \geq \left|p'_i-\frac{1}{2}\right|$.
\end{proof}

Now, the soft label used in the paper $p_i$ and corresponding logit $z_i$ for each $i$, satisfies $|z_i| \geq \gamma$. Hence, for each $i \in [n]$, $|p_i-\frac{1}{2}| \geq \frac{1}{2}\frac{e^|z_i|-1}{e^|z_i|-1} \geq  \frac{1}{2}\frac{e^{\gamma}-1}{e^{\gamma}-1}$. Now as $\gamma \leq 1$ by our construction of $z_i$, this implies, $ \frac{1}{2\min_i\left|p'_i-\frac{1}{2}\right|^2} \leq \frac{32}{\gamma^2}$ for all $i \in [n]$. Hence, we can write,

\begin{equation*}
    R(W) \leq \frac{32}{\gamma^2}R^{KL}(W).
\end{equation*}

\subsection{Final Steps for Proving Corollary \ref{cor: soft label requirement indicator}}
From Lemma \ref{lemma:relating indicator to kl}, to achieve the target classification loss $R(W(t)$ averaged over $t < T$ to be smaller than some $\epsilon > 0$, the following condition is sufficient :
\begin{equation*}
    \frac{1}{T}\sum_{t < T}R^{KL}(W(t)) \leq \frac{32 \epsilon}{\gamma}
\end{equation*}
Now substituting $\beta$ with $\frac{32 \epsilon}{\gamma}$ in the proof of the Theorem \ref{thm:soft label kl}, we arrive at the result by Corollary \ref{cor: soft label requirement indicator}.

\section{Sufficient Condition for Hard Label Training with Projected Gradient Descent: Proof of Proposition \ref{prop: hard label PGD}} \label{appendix:hard label}
We adapted the analysis of the paper \citep{ji2020polylogarithmic} to the PGD setting to arrive at the sufficient conditions for the classification loss averaged over epoch to be smaller than some $\epsilon > 0$ using hard label training. 

Similar to the soft label training case, let use define the following quantity corresponding to the hard label training:
\begin{equation}
     R^{t,h}(W) \coloneqq \frac{1}{n}\sum_{i=1}^n \ln(1+\exp(-y_if_i^t(W))).
\end{equation}
Lemma 2.6 in \citep{ji2020polylogarithmic} describing the convergence behavior of the hard label surrogate loss in the gradient descent training can be adapted to the PGD setting considered in this paper for hard label training using a projection to the set $S_B$ for some $B > 0$ at each step of the weight update. Let us assume. The use of the contraction property of the projection operator in the $l_2$ norm sense shows that the same guarantee holds for PGD case. 
Hence, for the PGD with projection radius $B$, we have for the hard label training when $\eta \leq 1$, 
\begin{equation*}
    \frac{1}{T}\sum_{t < T} R^{h}(W(t))    \leq \frac{B^2}{\eta T} + \frac{2\eta}{T}\sum_{t < T} R^{t,h}(\overline{W}).
\end{equation*}

 Now assign $\overline{W} = W(0)+BU$.
 
 Recall from the proof of Lemma \ref{lemma:NTK approx error}, with probability $\geq 1-\delta$ simultaneously for all $i \in [1,n]$ the following holds:
\begin{align*}
    |f_i^t(\overline{W})-f_i^0(\overline{W})|\leq  \frac{2}{\sqrt{m}}\left(\frac{\sqrt{2}}{c \sqrt{\pi}}B^2 + 2\sqrt{\ln\left(\frac{2n}{\delta}\right)}B \right).
\end{align*}

Also, recall from lemma \ref{lemma:sub sample},  we have with probability $\geq 1-\delta$
    \begin{equation}
        |f_i^0(U) - z_i| \leq \frac{1}{\sqrt{m}}\sqrt{2\ln(2n/\delta)} ; \forall i \in [1,n]
    \end{equation}
which implies $y_if_i^0(U) \geq \gamma - \frac{1}{\sqrt{m}}\sqrt{2\ln(2n/\delta)} ; \forall i \in [1,n]$.
Using triangle inequality, we can write:
\begin{equation*}
    y_i f_i^t(\overline{W})  \geq y_if_i^0(U) -  |f_i^t(\overline{W})-f_i^0(\overline{W})|.
\end{equation*}
Putting it together, we can write with probability $\geq 1-2\delta$,
simultaneously for all $i \in [1,n]$, the following holds true
\begin{align*}
    y_i f_i^t(\overline{W})  \geq B\gamma- \frac{2}{\sqrt{m}}\left(\frac{\sqrt{2}}{c \sqrt{\pi}}B^2 + 3B\sqrt{\ln\left(\frac{2n}{\delta}\right)} \right).
\end{align*}

We know the relation between $R(W)$ and $R^h(W)$ is given by $ R(W) \leq \frac{1}{\ln(2)}R^{ce}(W)$. Hence, we can write the sufficient condition for making $\frac{1}{T}\sum_{t<T}R(W(t)) \leq \epsilon$ can be written as

\begin{enumerate}
    \item \begin{equation}
        B = \frac{2}{\gamma}\ln \left(\frac{2}{\ln(2)\epsilon }\right).
    \end{equation}
    \item \begin{equation}
        m \geq \frac{16}{\gamma^4}\left(\frac{2\sqrt{2}}{c\sqrt{\pi}}ln\left(\frac{2}{\ln(2)\epsilon}\right)+3\sqrt{\ln\left(\frac{2n}{\delta}\right)}\right)^2.
    \end{equation}
    \item \begin{equation}
        T \geq \frac{8}{\gamma^2 \eta}\frac{\ln^2 \left(\frac{2}{\ln(2)\epsilon }\right)}{\ln(2)\epsilon}.
    \end{equation}
\end{enumerate}

\section{Intuition Behind the Proof Technique Resulting Better Neuron Requirement Using Soft Labels}

Under Assumption~\ref{ass:RKHS}, the random initialization of the neural network acts as a feature map that separates the dataset with a margin of order \(O\left(\gamma\right)\), provided the number of neurons is of the order \(O\left(\frac{1}{\gamma^2}\right)\). This is formalized in the Lemma \ref{lemma:sub sample}.

In other words, by choosing \(m \approx 1/\gamma^2\), the weight \(U\) should be able to separate the data. Both soft-label training and hard-label training aims to find a neural network weight that is in the direction of \(U\). However, the magnitudes of these weights differ significantly. For soft-label training, only a small scaler multiple of $U$ is needed to match the logits \(z_i\) for each input \(x_i\). In contrast, for hard-label training, the neural network's output after the softmax layer must match exactly \(0\) or \(1\). This stricter requirement needs a much larger multiple of $U$. To account for this greater deviation of weights from the initialization, hard-label training requires a substantially higher number of neurons to preserve the feature space separability effectively. As a result, the neuron requirement increases to \(\frac{1}{\gamma^4}\), which is much higher than \(\frac{1}{\gamma^2}\) as predicted by Lemma 7.

\end{document}